\documentclass[sigplan,10pt,nonacm]{acmart}
\renewcommand\footnotetextcopyrightpermission[1]{} %
\setcopyright{none}

\usepackage{booktabs}
\usepackage{graphicx}
\usepackage{xspace}
\usepackage{capt-of} %

\newcommand{\sys}{\textsc{TokTier}\xspace} %

\newcommand{\ratioColdSota}{$23.4\times$}
\newcommand{\ratioWarmSota}{$2.1\times$}
\newcommand{\ratioWarmSotaTwoM}{$3.0\times$}

\newcommand{\tok}[1]{\texttt{#1}}
\newcommand{\FE}{G}         %
\newcommand{\ME}{E}         %
\newcommand{\cat}{\,\Vert\,}
\theoremstyle{acmplain}
\newtheorem{assumption}{Assumption}
\theoremstyle{acmdefinition}
\newtheorem{remark}{Remark}

\begin{document}

\title{TokTier: Exact Stateful CPU+GPU Tokenization for Agentic LLM Serving}

\author{Zhenyu Zhang}
\affiliation{%
  \institution{Arizona State University}
  \city{Tempe}
  \state{Arizona}
  \country{USA}}
\email{zzhan641@asu.edu}

\author{Zhichao Cao}
\affiliation{%
  \institution{Arizona State University}
  \city{Tempe}
  \state{Arizona}
  \country{USA}}
\email{Zhichao.Cao@asu.edu}

\begin{abstract}
LLM serving systems cache prompt KV state, yet most front ends still
re-tokenize the full request text on every call. The mismatch is
most costly for coding agents, whose sessions repeatedly submit a
long transcript after appending a small tool result. However, reusing prior
tokenization results is hard because even a short append can change
token boundaries near the end of the previous sequence. Across
153{,}951 calls from two agent ecosystems, the median call appends only
about 1.4\,K characters, and only 1.0--3.6\% of calls start or rebuild
a session. Those calls, however, carry full contexts that reach
millions of characters. At the fleet level, the aggregate prompt-cache hit rate
is 94.1\%, and as it approaches 0.99, tokenization grows from 10\%
to 64\% of time to first token (TTFT) in our component measurements.

We present \sys, a stateful tokenization service for this two-mode
workload with CPU and GPU integration. \sys{} enforces one contract: emitted token IDs from \sys{} are always
identical to full reference tokenization of the request text. For a
session continuation, \sys{} keeps the session's previous token
sequence, re-tokenizes a small window around the append, and accepts
the splice only when a per-request check finds a stable
pre-tokenization boundary. A failed check triggers a wider window or
full reference tokenization. For a call without a reusable prefix,
\sys{} decomposes
GPT-family regex pre-tokenization into run-local rules and executes
exact pre-tokenization and BPE on a GPU for low latency and high throughput. A sampled shadow verifier
re-checks live traffic against the reference.

Across 17 production tokenizer families, our differential campaigns
include $1.50\times10^{10}$ split checks, full sweeps of a 12.4\,TB
real-text corpus, and 93{,}000+ replayed agent steps, all with zero
divergence. Incremental repair takes 0.5--1.1\,ms from 100\,K to
3\,M characters, up to \textbf{437$\times$} faster than HF tokenization and
\ratioWarmSota{} faster at 1\,M characters than the strongest
cache-based baseline (i.e., Gigatoken) in its most favorable, fully
prewarmed mode. GPU full tokenization encodes a 1\,M-character
request in 0.87\,ms, up to \textbf{491$\times$} below HF tokenization and
\ratioColdSota{} below the fastest previously published CPU method
on the same texts and protocol. With
vLLM in the loop,
\sys{} lowers median TTFT by 16--34\% in loaded
regimes and P99 TTFT by 23\% under recorded burst arrivals. Under a 50\,ms P99 objective, a
four-core CPU-based repair pool plus one GPU sustains 1{,}821 requests/s,
where a 16-core stateless CPU front end saturates at only 40 requests/s.
\end{abstract}

\maketitle

\section{Introduction}
\label{sec:intro}

A coding agent works through a sequence of LLM model calls. It reads a
file, edits it, runs a tool, observes the result, and decides what
to do next~\cite{react2023,sweagent2024}. Each call carries the
session transcript, and each tool result extends that transcript
before the next call goes out. A single user instruction can fan out
into dozens of calls. The calls arrive at machine cadence, and the
transcript grows from turn to turn.

\begin{figure*}[t]
  \centering
  \includegraphics[width=\textwidth]{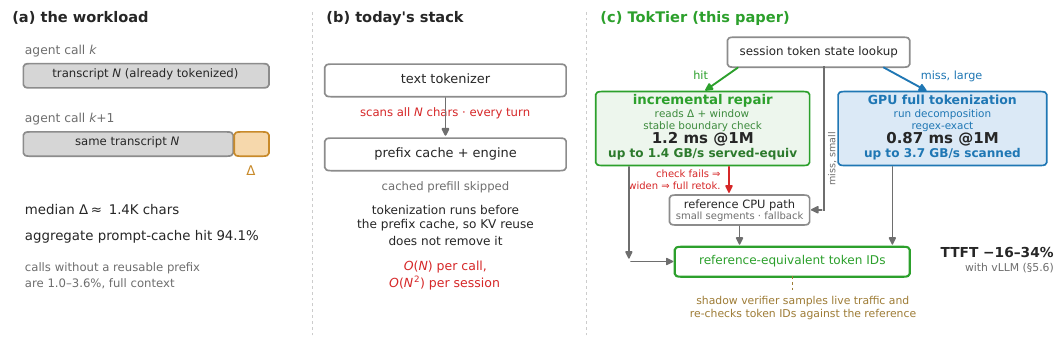}
  \caption{One agent turn through today's stack and through \sys.
  The request appends a median ${\sim}1.4$\,K characters to a
  transcript that was tokenized on previous turns, and today's front
  end re-scans the complete context before the prefix cache can act.
  \sys{} repairs session continuations around the append, routes
  rare initializations and rebuilds to an exact GPU path or the
  reference CPU path, and samples
  all outputs through a shadow verifier. Latencies are measured P50
  at 1\,M-character contexts, and the two bold figures are each
  path's best measured single-request throughput under the
  served-equivalent and scanned accountings, which are never mixed
  (\S\ref{sec:eval-warm}).}
  \label{fig:overview}
\end{figure*}

This access pattern exposes a mismatch in current LLM serving stacks.
Prefix caching (i.e., KV cache) lets the model reuse KV state for the unchanged part
of a prompt, but a stateless front end still converts the complete
request text into token IDs on every call. In the coding-agent traffic we
measure, the fleet-level prompt-cache hit rate (cached input tokens
as a share of all input tokens, pooled over the fleet) is 94.1\%. Moreover, the median
call appends about 1.4\,K characters to a context of 86\,K to
123\,K tokens. The model may process only the small uncached suffix via the KV cache,
while the tokenizer scans the entire transcript. As the prompt-cache
hit rate approaches 0.99, tokenization grows from 10\% to 64\% of
time to first token (TTFT) in our component sweeps (\S\ref{sec:workload}).

This type of workload has two distinct modes. On the one hand, most requests are session
continuations. They extend a session whose token sequence was
produced moments earlier. On the other hand, a small share, 1.0--3.6\% in our traces,
carries no reusable prefix. For example, a session starts, a compacted history is
rebuilt, or a request reaches a worker that holds no session state.
Such calls are rare, but they carry the full context and arrive in
bursts. Those two modes ask for different execution strategies to achieve high throughput and low latency during tokenization: 1) incremental repair should do work proportional to the newly appended text; and 2) full
tokenization should process a large, previously unseen context with
a low tail latency.

However, reusing token IDs across turns is harder than reusing a byte
prefix. Tokenization is not compositional. For strings $A$ and $B$,
$\mathrm{tok}(A) \cat \mathrm{tok}(B)$ can differ from
$\mathrm{tok}(A \cat B)$. The pre-tokenizer may move a piece
boundary when $B$ arrives, and BPE may then choose different merges
inside the piece. For example, a boundary inside the word ``pipeline'' turns one reference token into two different tokens
(\S\ref{sec:whyhard}). Fixed overlap heuristics do not solve the
problem, because some tokenizer rules let a change propagate past
any chosen radius. To ensure LLM serving performance and accuracy, a production system must know when a cached
prefix is safe to splice, and it must fall back when that condition
cannot be established.

Moreover, calls without a reusable prefix pose a separate problem. GPT-family
tokenizers begin
with leftmost-first regex matching, followed by BPE~\cite{gpt2}. The
regex is sequential in its usual form, because every match begins
where the previous match ends. Therefore, leveraging GPU or other hardware to achieve highly parallelized tokenization for high throughput and low latency while guaranteeing correctness is very challenging. Existing GPU tokenizers obtain
parallelism by relaxing this stage, by targeting a different
scheme, or by checking only against their own CPU
implementation~\cite{gputok2026,blockbpe2025,cudfsubword}. Token IDs are
both model input and prefix-cache keys, so a single changed ID can
alter model behavior and silently invalidate KV cache reuse.

To address those challenges, we designed and implemented \sys{} around these two
modes with one correctness contract: for every request, the emitted IDs must equal the IDs
produced by a frozen reference tokenizer (e.g., HF tokenizer) on the complete text.
\sys{} stores each live session's token IDs and byte spans. On a
session append, it re-tokenizes the new text plus a small suffix of
the old context. It compares the fresh and cached token records, and
splices only when the matched region passes a \emph{stable boundary
check}, a per-request test that the pre-tokenizer provably cannot
see across. When the check fails, \sys{} widens the window and
eventually runs the reference tokenizer on the full request. An
unsuccessful repair therefore costs latency without changing the
output, and it is rare. In replayed agent traffic, 56{,}049 of
56{,}052 appends splice on the first window, three widen it once
(0.005\%), and none falls back to full retokenization
(\S\ref{sec:eval-warm}).

For a call without a reusable prefix, \sys{} uses a GPU tokenizer
derived from an
equivalent representation of GPT-family pre-tokenization. First, the input
is classified into maximal character-class runs. Piece starts then
follow from the position within a run, a bounded amount of
neighboring text, and a few run-level summaries. This \emph{run
decomposition} removes the sequential regex scan while preserving
its output, and a GPU BPE pipeline encodes the resulting pieces.
In addition, small segments stay on the CPU, and every implementation binds to a
content-addressed tokenizer registry. Finally, a background verifier samples
live requests and compares their IDs with the reference
implementation. This guard covers bugs that depend on execution
history and therefore escape fresh-process tests.

The evaluation asks whether the design is exact, whether its two
paths match the measured workload, and whether the front-end gain
survives contact with a serving engine. We examine 17 tokenizer
families with version-pinned artifacts. Differential campaigns
include synthetic and adversarial inputs, full sweeps of a 12.4\,TB
real-text corpus, and 93{,}000+ replayed agent steps. The tested
configurations produce no divergence from the reference.
Incremental repair stays at 0.5--1.1\,ms from 100\,K to 3\,M
characters. It is up to 437$\times$ faster than HF tokenization and
\ratioWarmSota{} faster than the strongest cache-based alternative tokenizer (i.e., Gigatoken~\cite{gigatoken2026})
at 1\,M characters and \ratioWarmSotaTwoM{} at 2\,M. The GPU path
sustains 3.8--4.7\,GB/s and encodes a 1\,M-character full context
in 0.87\,ms, \ratioColdSota{} below the fastest previously
published CPU method. With vLLM in the loop, \sys{} reduces median
TTFT by 16--34\% in loaded regimes and P99 by 23\%
under recorded burst arrivals. A four-core repair pool and one GPU sustain
1{,}821 requests/s under a 50\,ms P99 objective, compared with 40
requests/s for a 16-core stateless CPU front end
(\S\ref{sec:eval}).

This paper makes three major contributions: 1) to the best of our
knowledge, it is the first to characterize session-level
tokenization in coding-agent traffic, revealing that the stream
consists of frequent small updates alongside rare, large rebuilds
(\S\ref{sec:workload}); 2) it presents an exact stateful CPU+GPU
tokenization service, up to 437$\times$ and 491$\times$ faster
than HF tokenization on its two paths, incorporating checked
boundary repair for session continuations and a run-local
reformulation of GPT-family pre-tokenization that enables full GPU
tokenization (\S\ref{sec:design}, \S\ref{sec:theory}); and 3) it
develops a
validation methodology that combines per-request checks,
version-pinned differential testing, real-text sweeps, and runtime
sampling, then evaluates the complete service in front of vLLM
(\S\ref{sec:eval}).

Our implementation is available at
\url{https://github.com/asu-idi/toktier}.

\section{Background and Workloads}
\label{sec:workload}

LLM serving studies usually summarize a request by prompt length, output
length, and arrival time. Tokenization needs one more distinction.
It must separate the text already seen in the current session from
the text that has just arrived. We write $N$ for the complete
context, $\Delta$ for the new text, and $h$ for the prompt-cache
hit ratio reported by the serving API. Full re-tokenization reads
$N$ characters. Ideal incremental repair would read close to
$\Delta$ while returning exactly the same token IDs.

\subsection{Where tokenization sits}
\label{sec:workload:sits}

An LLM serving front end receives text, applies added-token handling,
normalization, pre-tokenization, and subword encoding, then sends
token IDs to the model engine~\cite{hftokenizers,vllm2023}. Prefix caching (i.e., KV cache) begins
after those IDs
are provided to the inference engine~\cite{vllm2023,sglang2024}. A high KV-cache hit ratio therefore removes the redundant
model-side prefill work; it does not, however, remove front-end work.

The output of a tokenizer must also match the reference tokenizer used by the
model during training and previous serving, because token IDs are part of the model input and most
prefix caches index their entries by token sequence~\cite{sglang2024}. An approximate
tokenizer changes the token stream the model sees, which can significantly influence the serving accuracy and reduce KV cache
reuse. Published GPU tokenizers accept that trade (\S\ref{sec:related}).
We use the reference tokenizer's full-text output as the fundamental contract
throughout this paper and guarantee exact matches in \sys{}.

\subsection{Trace sources}
\label{sec:workload:sources}

Our primary dataset contains 153{,}951 calls from ten months of
day-to-day Claude~Code and Codex CLI use by six users on nine
machines. A local collector parses the agents' session logs, and
exports counts only (Appendix~\ref{app:workload} lists the
fields). Text is length-counted in memory and discarded, and
identifiers are HMAC-hashed with keys that never leave the source
machine. The parser also detects duplicated and replayed log
records. This check removed 26{,}578 phantom calls, 14.7\% of one
ecosystem's parsed data. Collection, consent, and parsing details
appear in Appendix~\ref{app:workload}.

We check these traces against three other independent sources.
Provider-side usage metadata for the same fleet covers
${\sim}5.12$ billion tokens of the
same kind of traffic. Here and throughout, the hit rate of a call is
the share of its input tokens the API reports as read from the
prompt cache, and a token-weighted aggregate sums both counts over
all calls before dividing. On that accounting, the fleet hit rate is
94.1\%. A public autonomous-agent trace
(\texttt{codex\_swebenchpro}) contains 20{,}230 calls from 610
successful SWE-Bench Pro
trials~\cite{swebenchpro2025,swebenchpro_traces}, whose publishers
report an aggregate hit rate of 94.2\% for that corpus. Its release preserves message
structure and message lengths, and substitutes length-preserving
filler for redacted spans (Appendix~\ref{app:workload:l3}). The TraceLab
corpus~\cite{tracelab2026} contains 357\,K steps from 4{,}265
Claude~Code and Codex sessions collected at another institution.
TraceLab omits text, so it cannot replay tokenization, but it
checks context size, append size, hit rate, and pacing externally
(Appendix~\ref{app:workload}). The sources share no collection
pipeline and no users. Where they overlap, they agree.

\subsection{Most calls add little text to a large context}
\label{sec:workload:anatomy}

\begin{figure}[t]
  \centering
  \includegraphics[width=.98\columnwidth]{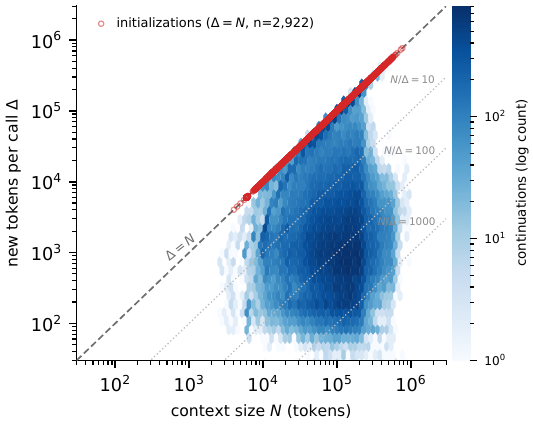}
  \caption{Joint distribution of context size $N$ and new tokens
  per call $\Delta$ over the 153{,}951 interactive calls, in the
  API's token accounting. Session continuations concentrate one to
  three orders of magnitude below the diagonal, so most calls add
  little text to a large context. Session initializations and
  rebuilds sit on the $\Delta{=}N$
  diagonal and carry complete contexts up to $10^6$ tokens.
  Marginal distributions and per-source detail are in
  Fig.~\ref{fig:workload} of Appendix~\ref{app:workload}.}
  \label{fig:workload-main}
\end{figure}

Figure~\ref{fig:workload-main} summarizes the common case. The
median append is about 1.4\,K characters in the interactive traces,
with P90 near 9--14\,K. The public autonomous trace is heavier, with
a 3.8\,K-character median, but it has the same shape. Median context
size is 86\,K (Codex) to 123\,K (Claude~Code) tokens and extends
toward a million tokens. At the same time, 74--87\% of calls have
$h{>}0.9$, and the median call has $h$ between 0.98 and 0.99.

These values expose the work amplification of full re-tokenization.
TraceLab's median step carries 126{,}180 cached-prefix tokens and
857 appended tokens. Its per-step ratio of complete context to
append has a median of 132 and a token-weighted aggregate of 23.5.
The same token-weighted ratio is about 10 on our per-call traces and
about 17 at the fleet's 94.1\% hit rate. Current text interfaces
therefore process one to two orders of magnitude more material than
the request adds.

Moreover, the agent calls also form deep loops. One user turn produces 3--5 model
calls at the median, 27--34 at P90, and 87--103 at P99. The
autonomous trace runs 30 calls per trial at the median. The tokenization cost
is paid at every step. Thus, an $O(N)$ tokenizer accumulates $O(N^2)$
work over a session whose context grows monotonically.

\subsection{Session initializations and rebuilds are rare but large}
\label{sec:workload:cold}

In trace terms, a full-context call arrives with no cached prefix
($h{=}0$) and a session continuation arrives with part of its prompt
already cached. In tier terms, a session state miss occurs when a
call reaches a worker with no reusable token state. This happens in
several cases, for example, at session start, after
history compaction, or after a migration. Full-context calls
account for only 1.0--3.6\% of calls pooled
by trace source. However, their sizes are usually very large,
differing from the typical append by two
orders of magnitude. They carry complete contexts of $10^4$ to
$10^6$ tokens and often arrive together, when many sessions start or
rebuild at once.

\subsection{Session state can outlive KV state}
\label{sec:workload:temporal}

Human pauses are long relative to default prompt-cache lifetimes.
Median gaps between user turns are 3.6--6.4 minutes in our traces.
About 55\% of Claude~Code gaps and 42\% of Codex gaps exceed the
5-minute default cache TTL. TraceLab shows the same decay from
the provider side, where the mean cached share of a step falls from 0.96 at sub-minute pauses to 0.17 beyond an hour
(Appendix~\ref{app:workload}). Token IDs and byte spans cost tens of
bytes per token, far less than KV state. A tokenization service can
therefore retain session state after the engine evicts the
corresponding KV blocks, with a very small memory overhead. When the session returns, the service
repairs the token sequence in milliseconds even if the model must
rebuild part of its KV cache. Section~\ref{sec:eval-econ} measures the
memory cost of this state and the session state hit rate that longer
retention buys (Figure~\ref{fig:memoryttl}).

\subsection{Tokenizer design requirements}
\label{sec:workload:implications}

The aforementioned measurements and analysis lead to \textbf{\emph{four design requirements}} for the new tokenizer: 1) the common path must
keep per-session token state and make its work follow the change
rather than the complete context; 2) full tokenization must absorb
large requests without creating a latency tail; 3) both paths must support
multiple frozen tokenizer versions, because 12+ model versions
appear simultaneously in our traces. And 4) every path must return
the IDs exactly the same as the reference token sequence, since the emitted IDs can influence both LLM output accuracy and KV cache effectiveness. This last requirement rules out boundary
heuristics and approximate GPU tokenizers even when their average
throughput is high.

\label{sec:workload:scissor}%
The absolute CPU cost of tokenization is manageable in some present deployments, but
its trend is unfavorable. Under the measured mixture, full
re-tokenization on a modern Rust CPU-based tokenizer costs 13.4\,ms of core
time per request, which reads as 6.7 front-end cores per 1{,}000
GPUs at 0.5 requests/s/GPU (Appendix~\ref{app:econ}). Three trends
multiply that number: 1) faster model inference raises requests
completed per GPU, 2) longer contexts increase full-tokenization time,
and 3) rising KV cache hit rates remove model-side latency while leaving front-end
work unchanged (Appendix~\ref{app:workload:gpucurve}). We treat the
trend as motivation rather than as a fleet-size prediction.
Section~\ref{sec:eval} measures the current service-time, tail, and capacity
effects directly.

\section{\sys{} System Design}
\label{sec:design}

\sys{} is a tokenization service placed between the request router
and the model engine. It accepts request text and a model
identifier, then returns exact reference-equivalent token IDs. The service
keeps token state for live sessions and selects an execution path
based on the state and the request size.
Figure~\ref{fig:arch} shows the data and workflow.

\begin{figure}[t]
  \centering
  \includegraphics[width=.96\columnwidth]{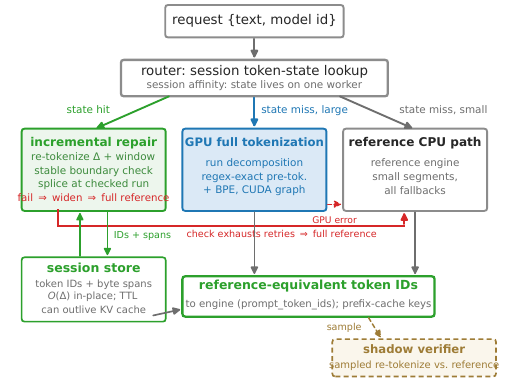}
  \caption{Request lifecycle through \sys. The router checks for
  live session token state, then sends session continuations to
  boundary repair, large state-miss segments to the GPU path, and
  small segments or any fast-path failure to the reference CPU
  path. The session
  store keeps token IDs and byte spans per live session. A shadow
  verifier re-tokenizes a sampled fraction of emitted IDs against
  the reference engine and quarantines divergences.}
  \label{fig:arch}
\end{figure}

Each tokenizer version is registered by content hash. Its entry
carries the added-token rules, normalization configuration,
pre-tokenization rules, vocabulary, merge table, and the
family-specific checks used by the repair and GPU paths. A request
is never interpreted through ambient process state, and a session
created under one tokenizer version is never repaired under
another.

The router classifies each request by session state. A
\emph{session state hit} finds a compatible session record, one
whose text prefix matches the new request. The record holds token
IDs and source byte spans from the previous call, and \sys{}
repairs this record near the change. If a \emph{session state miss}
finds no compatible record, the request is a session
initialization or a history rebuild. Therefore, its context is tokenized in
full. The router decides the request state inside the tokenization tier,
independently of the engine-side prompt-cache reuse $h$ of
\S\ref{sec:workload}. The two layers hold separate state with
separate lifetimes. In \sys{}, large state-miss segments go to
the GPU path of \S\ref{sec:theory}. Small segments use the
reference CPU tokenizer, where kernel-launch overhead would
dominate if we still used the GPU. Every path is held to one
output contract: \textbf{the emitted token IDs are always identical
to full reference tokenization of the request text.} Every
mechanism below may fail toward more work, never toward different
IDs.

\subsection{Why an append can change old tokens}
\label{sec:whyhard}

Modern BPE tokenizers apply two stages. A pre-tokenizer divides
text into short \emph{pieces}, usually with a regex, and BPE then
encodes each piece independently. Thus, an arbitrary text boundary is
not a token boundary.

\begin{figure}[t]
  \centering
  \includegraphics[width=.86\columnwidth]{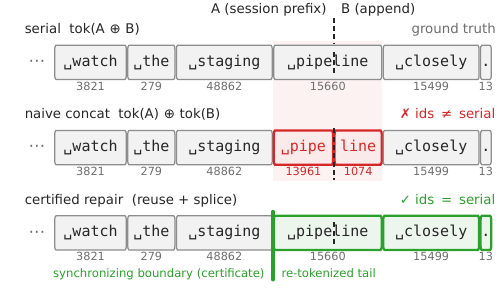}
  \caption{A boundary inside the word ``pipeline'' changes the token
  sequence (Llama-3.1-8B tokenizer, real token IDs below each box).
  Independent tokenization produces two tokens
  (\texttt{\textvisiblespace pipe}+\texttt{line}) where full
  tokenization produces one (\texttt{\textvisiblespace pipeline}).
  Repair re-tokenizes the affected region and reuses the cached
  prefix only after finding a stable boundary, reproducing the
  serial stream bit for bit.}
  \label{fig:seam}
\end{figure}

Figure~\ref{fig:seam} gives a real example. The previous request
ends after ``pipe'', and the next request appends ``line''. The
reference tokenizer sees ``\textvisiblespace pipeline'' as one piece
and emits one vocabulary token. Tokenizing the two sides
independently emits ``\textvisiblespace pipe'' and ``line'' as two
tokens. The appended text changes the trailing token of the old
prefix and shifts every later position. The drift will influence
the LLM output accuracy and silently invalidate KV cache reuse for
the rest of the session (since the token-ID sequence is the
prefix-cache key).

Two mechanisms create this problem. First, when text arrives on the
right, the pre-tokenizer can move a piece boundary. Second, the BPE
merge order inside the resulting piece can then change. A fixed overlap radius
around the append is not a correctness rule. Digit grouping,
whitespace lookahead, and newline absorption can push the effect
past any chosen radius, and our adversarial oracle produced
cascading counterexamples for every bounded-radius rule we
formulated. We
therefore use a window only to \emph{search} for a reusable
boundary, and base acceptance on a separate per-request check
derived from the tokenizer family. The published track record shows the
difficulty is routinely underestimated. LoPT, the one peer-reviewed
segmented tokenizer, specifies an experimental configuration that
does not satisfy its
safety theorem's stated precondition~\cite{lopt2026}. Gigatoken, a
high-throughput open-source engine, documents boundary stability as
an engineering assumption. Our full-corpus differential evaluation
over 11 tokenizer families finds mismatches against the reference
implementation at a rate of $1.3\times10^{-8}$ (558 divergent
document--tokenizer pairs in $4.18\times10^{10}$ comparisons); every
archived specimen traces to a difference in the Unicode data
versions the two engines carry rather than to segmentation, which is
precisely the kind of version-relative discrepancy an
assumption-based design cannot certify away~\cite{gigatoken2026}.

\subsection{Incremental repair}
\label{sec:warm}

\begin{figure}[t]
  \centering
  \includegraphics[width=.86\columnwidth]{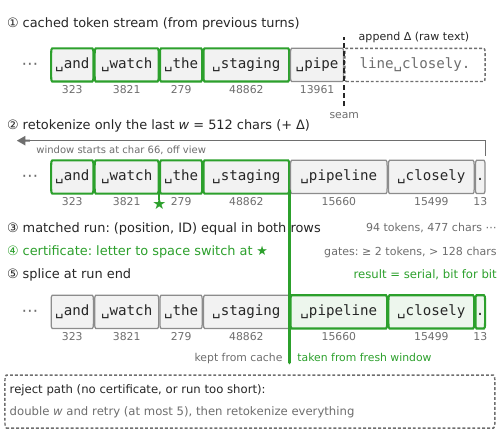}
  \caption{Incremental repair on the running example of
  Figure~\ref{fig:seam} (same frozen tokenizer, every token ID is
  real engine output). The service re-tokenizes the append and a
  suffix of the stored context, matches fresh and cached token
  records, checks that the equal run contains a stable
  pre-tokenization boundary (the starred class transition, called a
  certificate in the formalization of Appendix~\ref{app:proof}),
  and splices at the end of that run. A failed check widens the
  window and eventually falls back to full reference tokenization.}
  \label{fig:protocol}
\end{figure}

Incremental repair updates the previous token sequence instead of
rebuilding it. Suppose a session stored the tokenization of text
$A$ and the next request is $A \cat B$. The service takes the last
$w$ characters of $A$, with $w = 512$ by default, appends $B$, and
tokenizes this window with the same reference-compatible engine
used for the session. It then compares the fresh records with the
cached records that overlap the old part of the window.

Figure~\ref{fig:protocol} shows the full operation. \sys{}
finds the longest run of records on which the cached and fresh
sequences agree in both position and token ID. The \emph{stable
boundary check} then admits the run through three conditions: 1) the
run must contain at least two tokens; 2) it must cover more normalized
characters than the longest token the vocabulary can emit (probed
per tokenizer; 128 for the Llama family), which rules out an
accidental single-token match; and 3) most importantly, the run must
contain a family-specific stable boundary, a character-class
transition at which the pre-tokenizer's output to the right
provably does not depend on text to the left. A matched ID run
alone is not sufficient, because context-dependent digit grouping
defeats it, a failure our certificate-level oracle exposed
adversarially. Appendix~\ref{app:proof} formalizes the stable
boundary as a \emph{synchronizing boundary} and proves the splice
theorem.

When the conditions hold, cached records are kept before the
matched run and fresh records after it. The implementation
splices at the end of the equal run, which
Corollary~\ref{c:shift} shows is interchangeable with splicing at
the stable boundary inside it. The resulting ID sequence equals
full tokenization of $A \cat B$. When a condition fails, \sys{} doubles $w$ and retries, at most five times, then sends the
complete request to the reference engine. The search can miss an
available boundary, but it cannot accept an invalid token sequence output. A miss
therefore costs extra work and never changes the returned IDs. The
same procedure applied on both sides of an edit handles mid-context
mutations, and large rewrites that fail both searches are tokenized
in full. Misses are rare in practice: in replayed agent traffic,
56{,}049 of 56{,}052 appends splice on the first window, three widen
the window once, and none reaches the full-retokenization fallback
(\S\ref{sec:eval-warm}).

\subsection{Why the splice check is sufficient}
\label{sec:splicesound}

The formal argument is short at the system level. The pre-tokenizer
maps the input into an ordered sequence of pieces, and BPE encodes
each piece without state from other pieces. At a stable boundary,
the piece sequence to the right is independent of the left context.
If the cached and fresh records agree across a run that contains
such a boundary, both executions have reached the same piece
sequence and the same BPE output. Cached records can therefore
serve the left side, and freshly tokenized records serve the right.

The proof is parameterized by a frozen tokenizer configuration. We
establish which character-class transitions synchronize each
supported family and check that the configuration satisfies the
proof assumptions. 15 of the 17 families we examined meet
these conditions. The other two are provably outside the predicate
class (one normalizer erases whitespace structure entirely, leaving
no internal boundary to repair against) and always take the
full-retokenization path.

Although the theorem covers the family-level boundary abstraction, it does
not prove the evolving Rust, Python, and CUDA implementation. We
validate the implementation with version-pinned differential
campaigns against the reference tokenizer, and we sample deployed
outputs at runtime. Section~\ref{sec:eval-correctness} separates
these forms of evidence and reports their coverage, and
Appendix~\ref{app:guarantee} places the two nearest neighbors on
the resulting guarantee ladder. In general, across all differential
campaigns and deployed sampling to date, this validation has
observed zero divergence on the shipped configurations
(\S\ref{sec:eval-correctness}).

\subsection{State management}
\label{sec:state}

The session store keeps token IDs, byte spans, the tokenizer hash,
and a compact index from text positions to token records in memory. Repair
edits these arrays in place, and derived indices are updated
lazily, so the bookkeeping cost follows the repair window rather
than the full context. An earlier
implementation rebuilt offset arrays on every turn and silently
recovered $O(N)$ behavior even though tokenization touched only the
window. With it fixed, the protocol does $O(\Delta + w)$ work
end to end, independent of $N$.

\sys{} reclaims the state at session end. Therefore, memory usage depends only on live sessions
rather than content history, and it can achieve a longer lifetime than
KV cache (\S\ref{sec:workload:temporal}). A tokenizer worker owns the
sessions assigned to it. Requests that lose affinity become session
state misses and remain correct. Cross-worker state transfer is an
optimization that the current system does not require (planned in our future work).

\section{Exact GPU Tokenization}
\label{sec:theory}

Session initializations and history rebuilds contain no reusable
session state. Their contexts are
large enough to trigger long CPU service times, leading to high latency, especially when
several sessions start or rebuild together. To address this
challenge, \sys{} moves full tokenization to a GPU for high-performance parallel processing, which also guarantees exact equivalence to the reference token sequence.

\subsection{Removing the sequential regex scan}
\label{sec:rundecomp}

GPT-family tokenizers specify pre-tokenization as a leftmost-first
alternation regex with backtracking~\cite{gpt2}. In the direct
implementation, each match begins at the previous match's end. This
dependency prevents independent matching of arbitrary chunks for parallel processing, and
splitting the text first would recreate the seam problem of
\S\ref{sec:whyhard}. Prior GPU tokenizers avoid the tension by
weakening this stage or by targeting a different
specification~\cite{gputok2026,blockbpe2025,cudfsubword}. Therefore, none of
them reports reference-exact token-ID agreement, which our output
contract requires.

\begin{figure*}[t]
  \centering
  \includegraphics[width=\textwidth]{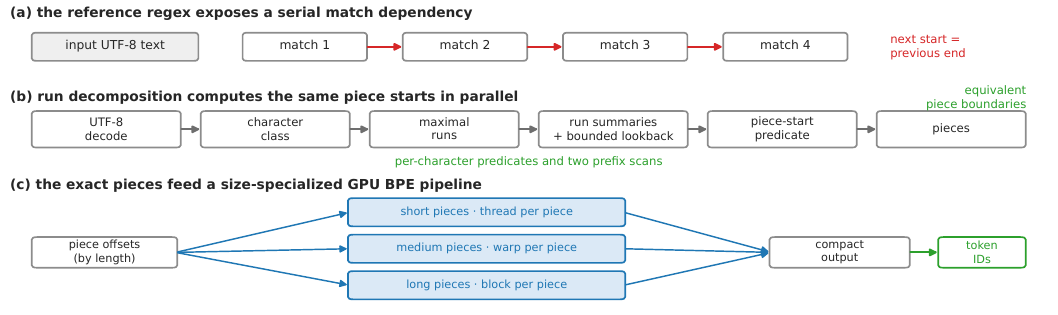}
  \caption{Run decomposition. The reference regex exposes a serial
  dependency between matches. The equivalent formulation computes
  character classes, maximal runs, and a local piece-start predicate
  with parallel passes. The resulting pieces feed size-specialized
  BPE kernels.}
  \label{fig:rundecomp}
\end{figure*}

Our central observation is that the production patterns we study
never needed a backtracking engine. We classify each character into
one of four classes (letter, number, whitespace, other) and form
maximal same-class runs. Whether a character begins a piece then
depends only on its offset within the run, at most four characters
of lookback across the run boundary, and three per-run aggregates
(run start, first non-CRLF position, last CRLF position). Character
classification can be applied in parallel, run summaries come from prefix scans,
and the piece-start test is an independent per-character predicate
(Figure~\ref{fig:rundecomp}).

We derive the rules one regex alternative at a time. The cl100k
lineage maps directly onto the four classes and local run rules.
The o200k lineage adds a case-boundary rule and a sparse
contraction path, and Unicode marks take a narrow sparse fallback
while the common path stays vectorized. DeepSeek-family
tokenizers~\cite{deepseekv3,deepseekv3tok} apply three splitters in
sequence.
Their composition folds into the same per-character predicate,
provided each splitter re-splits every piece of the previous stage
and unmatched spans survive as implicit pieces.

The derivation makes the regex-to-run-rule mapping concrete in two
steps. First, for each alternative of the reference regex, it names
the run rule that reproduces that alternative's matches. Second, it
pins the tables under which this correspondence holds. One is the
tokenizer configuration, frozen by content hash. The other is the
set of Unicode character-class tables, probed directly out of the
reference engine rather than taken from library data
(\S\ref{sec:eval-correctness}). A change to either table therefore
invalidates the derivation instead of silently changing its
meaning.

Differential testing then checks the implementation at two levels,
the piece boundaries produced by pre-tokenization and the final
token IDs (\S\ref{sec:eval-correctness}). Both levels are needed,
because the ID level can hide a boundary failure. Suppose a faulty
rule splits one piece into two halves and no merge in the
vocabulary spans the split point. BPE then encodes the two halves
into the same ID sequence that the intact piece would produce, and
an ID-level comparison passes on a wrong piece division. The same
fault becomes visible only when an input happens to place a
mergeable pair across that seam, so an ID-only harness finds it by
accident rather than by construction. Boundary-level comparison
removes that dependence on the input.

\subsection{GPU BPE}
\label{sec:kernels}

After pre-tokenization, each piece is encoded independently. Merge
ranks, vocabulary entries, and byte strings are packed into GPU
hash tables. Two equivalences license a parallel schedule. A
round's minimum merge rank identifies one pair value, and merging
all leftmost non-overlapping occurrences of that pair equals
iterated single merges. Only pairs adjacent to a completed merge
must be probed again.

Work is dispatched by piece length, so that each piece is encoded
by the smallest execution unit able to hold its state. Pieces up to
32 bytes, the vast majority in natural text, run thread-per-piece
with the sequence in registers. Pieces of 33--128 bytes run
warp-per-piece: the 32 lanes probe candidate pairs in parallel, a
shuffle reduction finds the minimum rank, a ballot materializes the
hit positions as a bitmask, and the leftmost non-overlapping
selection is then evaluated in closed form on that mask. Longer
pieces fall to a block-per-piece kernel with a shared-memory
candidate bitmap. Families configured with \texttt{ignore\_merges}
(Llama~3, gpt-oss) admit one further shortcut: their reference
encoder emits a single ID whenever the whole piece is already a
vocabulary entry, so the kernel probes the vocabulary first and
skips the merge loop on a hit.

The complete pre-tokenization path consists of on-device UTF-8
decoding, character classification, run construction, the rule
predicate, and two prefix scans. Multi-document batches
concatenate inputs and cut runs at document boundaries, at
0.36\,\textmu s per small document. Pre-tokenization alone reaches
30--77\,GB/s on one RTX~PRO~6000, depending on script mix. The
complete path, including BPE and output compaction, sustains a
multi-document batch throughput of 3.8--4.7\,GB/s
(876--1206\,Mtok/s) across English, CJK, and
templated corpora, byte-identical to the reference.

\subsection{Single-request path}
\label{sec:singlereq}

Batch throughput does not determine the latency of a single
state-miss request. A
conventional GPU pipeline reads intermediate counts back to the
host before choosing the next launch. Those synchronizations are
invisible in batch benchmarks and only visible in a single request,
especially when the host is busy. \sys{} keeps piece counts,
dispatch lists, and output length in device memory. Kernel geometry
depends on buffer capacity rather than host-visible counts, and the
whole bytes-to-IDs chain is captured as a CUDA graph over a small
set of input-size buckets. Host synchronizations drop from eight
per request to two. Under contention from 28 competing CPU
processes, the graph path keeps P99 near its idle value, moving
from 0.38 to 0.49\,ms on a 50\,K-character request, while the eager
pipeline's P99 doubles from 0.52 to 1.05\,ms
(\S\ref{sec:eval-cold}).

\subsection{Scope and fallbacks}
\label{sec:coldpath}

The GPU path covers the tokenizer body for the cl100k, o200k, and
DeepSeek pattern families evaluated in this paper. Added-token
literal extraction runs before family dispatch. NFC normalization
stays on the CPU for the one family that requires it. The GPU path
currently returns IDs without source byte spans, so a state-miss
request that must initialize session state for later repair uses
the reference tokenizer to obtain spans. Unsupported tokenizers and requests that fail a
family check remain on the CPU reference path. In addition, segments above a
2\,KB threshold go to the GPU while smaller ones stay on the CPU.
The router also spills to the CPU under GPU backpressure. These
cases may change latency and capacity, but the output contract is
unchanged.

\subsection{Service implementation}
\label{sec:integration}

The session store and repair logic are implemented in Rust.
Tokenizer entries are immutable after registration, and workers use
session affinity so a session continuation reaches the state its
previous turn created. Added and special tokens are literal strings that
the model reserves as single IDs, \texttt{<|endoftext|>} for
example. A leftmost-longest literal pass extracts them before the
tokenizer family handles the remaining segments.

\paragraph{Reference CPU path.}
\label{sec:cpufloor}
This path is the frozen HuggingFace reference implementation. It
serves the small segments, unsupported families, repair fallback,
backpressure spill, and state rebuilds that need source spans. It
carries under 0.3\% of request characters across our serving mixes,
so reference speed suffices. The tier also ships the fastest
available Rust tokenizer, but strictly as a timing baseline. Our
verifier caught exactly this class of engine returning
history-dependent IDs (\S\ref{sec:eval-verify}), so speed and
correctness anchoring are assigned to different engines by
construction. The repair-window engine itself is pluggable. We run
the same protocol unchanged over the Python reference stack and a
Rust crate, with the certification replay campaigns of
\S\ref{sec:eval-correctness} as the admission bar.

\paragraph{Engine interface.}
\sys{} submits token IDs through vLLM's
\texttt{prompt\_token\_ids} interface. We verified that
prefixes inserted through text and through IDs share the same vLLM
prefix-cache key space. Text-written prefixes are hit by
ID-submitted requests at a 99.84\% rate (99.94\% in reverse) with
identical outputs, so the engine needs no modification.
Instrumenting vLLM~0.25 also located the original tokenization work
in the API-server process before engine scheduling, which is why
removing it changes TTFT.

\paragraph{Shadow verification.}
\label{sec:shadow}
A background thread samples emitted sequences and re-tokenizes the
same text with the reference engine. The verifier compares IDs
exactly and quarantines any mismatch with a content hash for
offline reproduction. If it falls behind, it drops samples and
increments a visible counter. The sampling rate is 5\% in the serving experiments and
100\% in offline sweeps. Runtime sampling covers history-dependent
implementation failures that no fresh-process test can exercise
(\S\ref{sec:eval-verify}).

\section{Evaluation}
\label{sec:eval}

We first present the overall comparison against every measured
baseline, then test both execution paths against frozen reference
tokenizers. We measure
incremental repair and GPU full tokenization separately, followed by evaluating
burst tails, capacity, and TTFT with vLLM. The final
experiments cover resource use and runtime verification. What
limits the GPU path is discussed with the other limitations in
\S\ref{sec:discussion}.

Experiments run on a dual-socket AMD EPYC~9115 host (32 physical
cores, SMT disabled) with four RTX~PRO~6000 Blackwell GPUs of
96\,GB each. CPU experiments are NUMA-pinned. One GPU serves
vLLM~0.25 with prefix caching enabled and one serves \sys{} unless
stated otherwise. Tokenizer artifacts, datasets, and dependency
versions are frozen by content-addressed manifests. CPU one-shot
latency uses one measured encode per process, because Rust
tokenizers retain word caches across calls.

The reference is the HuggingFace fast tokenizer for each frozen
model. The main CPU performance baseline is
\emph{fastokens}~\cite{fastokens}, a separately distributed Rust
tokenizer that vLLM supports as an opt-in backend. We also compare \sys{} with
Gigatoken~\cite{gigatoken2026}, a cache-based high-performance CPU
tokenizer, and with our clean-room LoPT
reproduction~\cite{lopt2026}. Approximate GPU tokenizers appear only in correctness
comparisons, because they do not satisfy the output contract.

\subsection{Performance overview}
\label{sec:eval-overview}

\begin{figure*}[t]
  \centering
  \includegraphics[width=.92\textwidth]{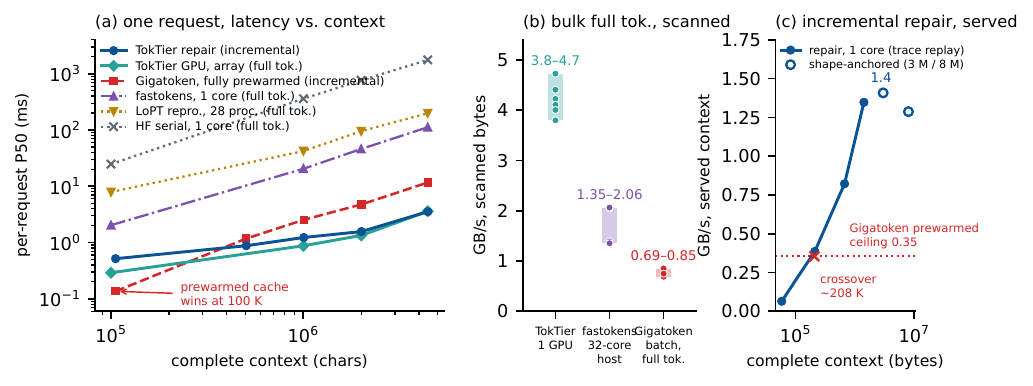}
  \caption{Performance against every measured baseline.
  (a)~Single-request P50 latency versus complete context on
  identical real texts. Incremental-repair lines price one append on
  a live session under the protocol of \S\ref{sec:eval-warm}, and
  full-tokenization lines price a full encode of a fresh request
  under the protocol of \S\ref{sec:eval-cold}. Gigatoken runs in its most favorable,
  fully prewarmed mode and wins below the 100\,K--500\,K
  crossover. (b)~Bulk full tokenization in the scanned-bytes
  account, with per-corpus measured points. The Gigatoken column
  is its batch mode on the same host with an empty cache.
  (c)~The served-context account for one incremental-repair core
  versus complete context, on the public-trace replay of
  Appendix~\ref{app:equiv}, with shape-anchored points at the 3\,M
  and 8\,M shapes. The dotted line is Gigatoken's prewarmed ceiling
  in the same served account, crossed near 208\,K bytes. The
  scanned and served accounts sit on separate axes and are never
  mixed (\S\ref{sec:eval-warm}).}
  \label{fig:headline}
\end{figure*}

Figure~\ref{fig:headline} places \sys{} against every measured
baseline before the detailed protocols. On session continuations, \sys{} repair
prices one append at 0.5--3.5\,ms P50 from 100\,K to 4.4\,M
characters and stays near flat, while every full retokenizer grows
with the context. The strongest cache-based baseline, Gigatoken, wins below
its 100\,K--500\,K crossover and loses by a growing margin beyond
it. On session initializations, the GPU path answers a fresh request in
0.29--3.59\,ms over the same span, at least 7$\times$ below every
CPU full-tokenization baseline at every shape. In bulk full
tokenization, one GPU sustains 3.8--4.7\,GB/s, against 1.35--2.06\,GB/s for the
strongest 32-core CPU configuration and 0.69--0.85\,GB/s for
Gigatoken batch on an empty cache, while one \sys{} repair core serves up
to 1.4\,GB/s of context under the separate served account. The
rest of this section establishes exactness first, then details the
protocols and baselines behind each line (\S\ref{sec:eval-warm},
\S\ref{sec:eval-cold}).

\subsection{Exactness}
\label{sec:eval-correctness}

\begin{table}[t]
  \centering\small
  \caption{Main correctness campaigns. Split rows compare every
  pre-tokenization boundary, and end-to-end rows compare final
  token IDs. Every comparison is against the reference
  implementation.}
  \label{tab:correctness}
  \begin{tabular}{@{}p{2.2cm}p{3.1cm}r@{\hspace{5pt}}r@{}}
    \toprule
    Path & Inputs & Checks & Div. \\
    \midrule
    GPU split-level & synthetic, adversarial, and a full 12.4\,TB
    real-text sweep; four pattern families & $1.50{\times}10^{10}$ & 0 \\
    GPU end-to-end & six production tokenizers; synthetic,
    adversarial, and real documents & $6.21{\times}10^{7}$ & 0 \\
    Incremental repair & two public agent corpora and 15{,}000 adversarial
    edits & $1.09{\times}10^{5}$ & 0 \\
    Shadow sample & offline, capacity, and vLLM runs &
    $>5{\times}10^{4}$ & 0$^{*}$ \\
    \bottomrule
  \end{tabular}
  \\[2pt]{\footnotesize $^{*}$the verifier also exposed one genuine
  bug in an external tokenizer (\S\ref{sec:eval-verify}).}
\end{table}

Table~\ref{tab:correctness} summarizes the differential campaigns.
Every admitted configuration reports zero divergence. The
split-level campaigns are multiplicative rather than sharded. Each
of the four GPU-path pattern families ran its own sweep of the
12.4\,TB corpus (Nemotron-CC~v2.1, distributed as 4.59\,TB of
compressed archives, and corpus sizes throughout this paper are
decompressed text), every document for two families and
$3.7\times10^{9}$ documents each for the other two.

Split-level testing found an implementation bug that final IDs did
not expose reliably. An o200k fallback path missed a run-head
condition and could fuse two pieces across a chunk seam. A
six-character input reproduces the failure. Smaller synthetic suites had
passed clean, and the bug appeared within one minute of the
million-input mixed campaign. We fixed it and added regression
pins.

Real text found a different class of problem. Character-class
tables that three components had derived from stdlib data
(Unicode~15.0) disagreed with the reference engine's own tables
(16.0) on ${\sim}9.7$\,K codepoints. Synthetic generators built from
the older tables could never emit those characters, and two hours
of CommonCrawl-derived text exposed the skew that $6\times10^{7}$
synthetic checks could not. Every class table is now probed
directly out of the reference engine. The corpus-scale end-to-end
comparator matched the unmodified reference encoder bit for bit on
$3.7\times10^{6}$ documents before it judged anything, and it
caught all planted faults used to validate the harness. The
DeepSeek group (V3,
V4-flash, HY3) then passed the same ladder with zero divergence.
That ladder comprises $10^{7}$ synthetic split checks, end-to-end
suites on all three variants and dispatch paths
($1.28\times10^{5}$ checks), and one full-corpus split sweep of
every document ($3.80\times10^{9}$ checks). The three variants
share the splitter configuration hash, so one sweep judges the
group.

Incremental repair replays 74{,}064 repairs from public SWE-smith
trajectories and 19{,}620 from the public autonomous-agent trace,
with zero ID divergence. Re-replaying both corpora under the
shipped boundary-check configuration accepts every splice the
length-only check accepts, 92{,}484 certified splices with zero
violations and indistinguishable latency. It also applies 15{,}000
adversarial mid-context edits targeting digit runs,
ideograph--punctuation seams, and fraction characters, with
identical behavior. Unsupported families and failed boundary checks
take the fallback path and are excluded from the accepted count.

\subsection{Incremental repair: latency and throughput}
\label{sec:eval-warm}

\begin{figure}[t]
  \centering
  \includegraphics[width=.82\columnwidth]{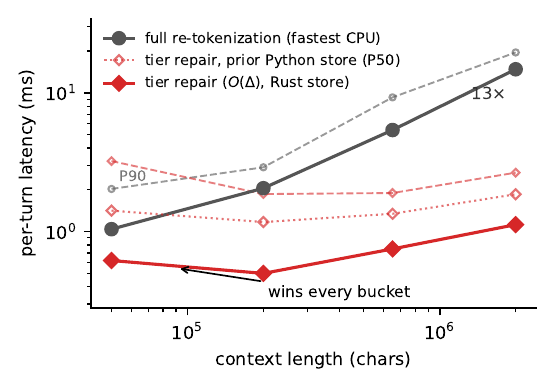}
  \caption{Incremental repair latency versus complete context length on
  the public-trace replay (log--log, solid P50, dashed P90). The
  Rust session store keeps \sys{} repair latency at 0.5--1.1\,ms from 100\,K to
  3\,M characters, while full CPU re-tokenization grows with
  context size. The dotted line is the prior Python-store
  implementation on the same protocol.}
  \label{fig:heal}
\end{figure}

Figure~\ref{fig:heal} measures 12{,}674 real append steps under the
o200k family. The shipped Rust store, which replays the
certification battery of \S\ref{sec:eval-correctness} with zero
divergence, keeps median \sys{} repair latency between 0.5 and 1.1\,ms
from 100\,K to 3\,M characters. Full re-tokenization on fastokens
with a prewarmed word cache grows with the complete context and loses
in every bucket. \sys{} repair is 1.7$\times$ faster below 100\,K
characters, 4.1$\times$ from 100\,K to 300\,K, 7.2$\times$ from
300\,K to 1\,M, and 13$\times$ from 1\,M to 3\,M. The comparison
leans the baseline's way, since fastokens re-timed with an empty
word cache runs 2.6--3.0$\times$ slower than the archived curve we
plot. The
previous Python session store measured 1.2--1.9\,ms with mild
growth, because it copied arrays during each splice. In-place Rust
bookkeeping recovered the intended $O(\Delta+w)$ scaling.

\begin{figure*}[t]
  \centering
  \includegraphics[width=.86\textwidth]{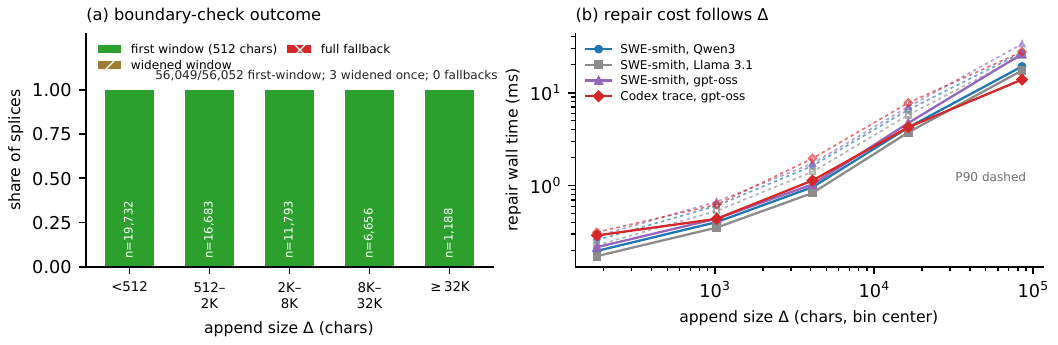}
  \caption{Boundary-check behavior over 56{,}052 replayed real
  splices (SWE-smith streams under three families, plus the public
  Codex trace). (a)~The default 512-character window accepts
  56{,}049 splices on the first attempt at every append size, three
  splices widen the window once, and none falls back to full
  retokenization. (b)~\sys{} repair wall time grows with the append and
  not with the context, as $O(\Delta+w)$ predicts. Latency here is
  replay-measured in one process and supports shape comparisons
  only.}
  \label{fig:boundary}
\end{figure*}

Fast-path coverage is a separate question from speed, so we
instrument the same replays for per-splice window outcomes
(Figure~\ref{fig:boundary}). The first 512-character window accepts
99.995\% of 56{,}052 real splices, across append sizes from tens of
characters to 213\,K. Three splices widen the window once, and none
reaches the full-retokenization fallback. Adversarial inputs can
force the fallback by construction. The observed agent traffic does
not.

\begin{table}[t]
  \centering\small
  \caption{Session-continuation append latency (ms) on identical Qwen3 session
  texts, same host. One timed call per sample, $n{=}24$ per shape,
  median append 1.5--1.6\,K characters. Context sizes (ctx) are in
  characters. Gigatoken runs in its most favorable mode, with its
  per-object cache fully prewarmed on the session prefix. The 3\,M
  and 8\,M shapes of the same protocol appear in the text and in
  Appendix~\ref{app:delta}.}
  \label{tab:warmhead}
  \begin{tabular}{@{}lrrrrr@{}}
    \toprule
    & \multicolumn{2}{c}{repair (ours)} &
      \multicolumn{2}{c}{Gigatoken prewarmed} & HF serial \\
    ctx & P50 & P90 & P50 & P90 & P50 \\
    \midrule
    100\,K & 0.52 & 0.87 & 0.14 & 0.22 & 23.6 \\
    500\,K & 0.88 & 1.68 & 1.18 & 1.25 & 165.2 \\
    1\,M   & 1.23 & 2.83 & 2.53 & 2.64 & 318.0 \\
    2\,M   & 1.57 & 4.60 & 4.76 & 5.25 & 658.8 \\
    4.4\,M & 3.54 & 10.01 & 11.66 & 12.11 & 1548.0 \\
    \bottomrule
  \end{tabular}
\end{table}

Table~\ref{tab:warmhead} prices one continuation append on identical real
session texts. Gigatoken wins at 100\,K characters, and the
crossover lies between 100\,K and 500\,K. At 1\,M characters repair
is \ratioWarmSota{} faster at the median, at 2\,M it is
\ratioWarmSotaTwoM{} faster, and the lead widens with context
(3.4$\times$ at 3\,M, 3.3$\times$ at 4.4\,M). Gigatoken's latency
keeps growing because even a full cache hit rescans the complete
context, while \sys{} repair scans the append and the window. Its cache is
also process-lifetime memory keyed on content, where \sys{} repair state
is per-session and freed. Fastokens takes 9.7\,ms at 1\,M
characters on the same inputs and stays 7.9--16$\times$ slower than
\sys{} repair from 1\,M upward. On
an 8\,M-character headroom shape beyond any deployed window (median
2.04\,M tokens), \sys{} repair still leads at P50 (6.25 vs.\ 23.42\,ms),
and the first repair after each session bootstrap forms the entire
P90 tail at 26--32\,ms. Appendix~\ref{app:delta} sweeps the append
size from 1\,K to 100\,K characters at every shape. \sys{} repair cost
grows with the appended bytes at 2.9--3.9\,MB/s, as $O(\Delta)$
predicts, and the fully prewarmed cache catches up only past the
measured append distribution's 99th percentile of 38\,K characters.
Appendix~\ref{app:routes} evaluates three exploration routes that
reclaim that large-append regime.

The context sizes above are characters, not tokens. At the measured
3.9--4.2 characters per token on these texts, the 4.4\,M-character
row corresponds to ${\approx}1.14$\,M tokens (minimum 1.08\,M) and
covers the million-token context windows now in deployment. A 1{,}050{,}000-token
window is in deployment~\cite{gpt56sol}, and $10^{6}$-token windows
ship across other frontier families~\cite{claude5window}.

Per-turn latency understates what a repair core delivers, so we also
keep two throughput accounts, strictly apart. The \emph{served}
account credits a turn with all context bytes it delivers, the same
crediting a cache gets. The \emph{scanned} account counts only
bytes physically retokenized, and for every full retokenizer the
two coincide. Trace-weighted over the replay, one repair core
serves 0.44\,GB/s of context while scanning 3\,MB/s, and its served
curve crosses Gigatoken's prewarmed ceiling at ${\sim}200$\,K
context bytes (Figure~\ref{fig:headline}c and
Appendix~\ref{app:equiv}). At the 3\,M-character
shape, the served account reaches 1.4\,GB/s for one core
(Figure~\ref{fig:headline}c). The served account is an
incremental-repair account only. GPU full-tokenization throughput
(\S\ref{sec:eval-cold}) counts physically scanned bytes, and the
two accountings are never mixed.

\subsection{Full tokenization: latency and throughput}
\label{sec:eval-cold}

The complete GPU path sustains 3.8--4.7\,GB/s across English, CJK,
and templated corpora, and pre-tokenization alone reaches
30--77\,GB/s. The strongest CPU configuration we could construct,
fastokens with one single-threaded worker per core and NUMA-local
data placement, sustains 1.35--2.06\,GB/s on the full 32-core host,
and Gigatoken batch mode reaches 0.69--0.85\,GB/s on the same host
on an empty cache.
The unpinned deployment reaches only 0.89--1.26\,GB/s and degrades
beyond 16 processes, so NUMA placement must be reported for CPU
baselines, and we apply the same correction to every CPU number in
this paper.

\begin{figure}[t]
  \centering
  \includegraphics[width=.80\columnwidth]{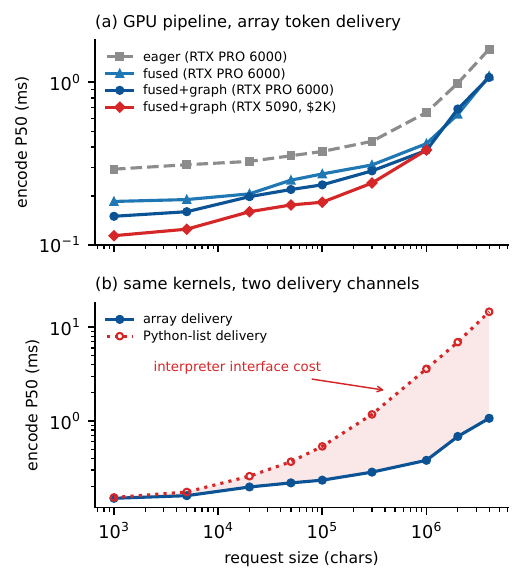}
  \caption{Full tokenization single-request encode P50 (Qwen3 family, steady
  state). (a)~With array token delivery, every dispatch variant
  stays near or below one millisecond across three decades of
  request size, and the consumer RTX~5090 leads the server card.
  (b)~The same fused+graph kernels behind two delivery channels.
  Materializing a Python \texttt{list[int]} adds interpreter
  interface cost that dominates beyond 100\,K characters. Points
  beyond 1\,M characters are extension points from the same
  protocol.}
  \label{fig:latency}
\end{figure}

Session-initialization latency matters more to the service.
Figure~\ref{fig:latency} shows steady-state single-request
measurements. The CUDA-graph path with array output takes 0.15\,ms
at 1\,K characters, 0.38\,ms at 1\,M, 0.68\,ms at 2\,M, and
1.07\,ms at 4\,M. Materializing a Python \texttt{list[int]} adds a
host-side cost that dominates beyond 100\,K characters. We report
array and list delivery separately, because one measures the GPU
pipeline and the other also measures the interpreter interface.
Engine handoff through today's Python APIs pays the list cost, and
our TTFT results include it in full.

\begin{table}[t]
  \centering\small
  \caption{Full tokenization single-request latency (ms) on identical real
  texts, one timed call per sample ($n{=}20$ per cell), context
  sizes in characters. The upper block is the normalized family
  (Qwen3, NFC), the last row the non-normalized family
  (Llama~3.1). Gigatoken uses a new object with an empty cache
  (construction excluded). The LoPT paper reports 116.8\,ms on a
  112-core node at LongBenchV2 lengths, an external anchor. The
  28-process row is the same-host comparison (zero retries at every
  shape). The 4.4\,M shape exceeds the graph path's largest capture
  bucket ($2^{22}$ bytes), so the GPU rows there run the same
  kernels without graph replay.}
  \label{tab:coldhead}
  \setlength{\tabcolsep}{2.2pt}%
  \begin{tabular}{@{}lrrrrr@{}}
    \toprule
    & \multicolumn{4}{c}{P50 by context size} & P90 \\
    \cmidrule(lr){2-5}
    & 100\,K & 1\,M & 2\,M & 4.4\,M & 4.4\,M \\
    \midrule
    \sys{} GPU, array delivery & \textbf{0.29} & \textbf{0.87} & \textbf{1.34} & \textbf{3.59} & 133.1 \\
    \sys{} GPU, Python list & 0.56 & 3.86 & 8.61 & 19.87 & 144.5 \\
    Gigatoken, empty cache & 0.64 & 6.05 & 8.78 & 19.42 & 22.3 \\
    fastokens, 1 core & 2.03 & 20.5 & 45.9 & 111.9 & 120.0 \\
    LoPT repro., 28 processes & 7.91 & 42.2 & 95.3 & 198.5 & 228.4 \\
    HF serial, 1 core & 24.8 & 360.3 & 762.9 & 1762.8 & 1833.5 \\
    \midrule
    \sys{} GPU, array (Llama~3.1) & 0.27 & 0.80 & 1.31 & 3.28 & 4.1 \\
    \bottomrule
  \end{tabular}
\end{table}

Table~\ref{tab:coldhead} uses the stricter serving protocol. Every
request is a new real-text sample, encoded once. At 1\,M
characters, the GPU path answers in 0.87\,ms with array output.
That is \ratioColdSota{} below the fastest previously published
CPU configuration under the same protocol (8.2$\times$ at P90) and
6.9$\times$ below the strongest CPU number we measured from an empty
cache, Gigatoken. At 2\,M characters the respective
values are 1.34\,ms, 45.9\,ms, and 8.78\,ms. Python-list delivery
clears every CPU row through 1\,M characters (3.86 vs.\ 6.05\,ms),
then approaches Gigatoken at the largest sizes. Past 2\,M the
interpreter, not the GPU, binds that channel. Fresh samples also
pay normalization quick-checks and buffer-geometry variance that a
re-encoded corpus slice never sees, which is why this table's 1\,M
median (0.87\,ms) exceeds the steady-state 0.38\,ms. Baseline
comparisons use the sampled protocol only.

The largest Qwen3 samples expose a tail limitation. At 4.4\,M
characters, 4 of 20 inputs fail the GPU NFC quick check and pay CPU
renormalization, lifting P90 to 133\,ms. Llama~3.1 ships no
normalizer and holds P90 at 4.1\,ms on the same shape. We therefore
report normalized and non-normalized families separately in
Table~\ref{tab:coldhead}, and full GPU normalization remains an
implementation gap.

With 28 competing CPU processes, the eager pipeline's P99 doubles
from 0.52 to 1.05\,ms at 50\,K characters, while the graph path
moves from 0.38 to 0.49\,ms (Figure~\ref{fig:contention} in
Appendix~\ref{app:tables}). The kernels are unchanged.

\subsection{Burst tails and capacity}
\label{sec:eval-tail}

\begin{figure*}[t]
  \centering
  \includegraphics[width=.60\textwidth]{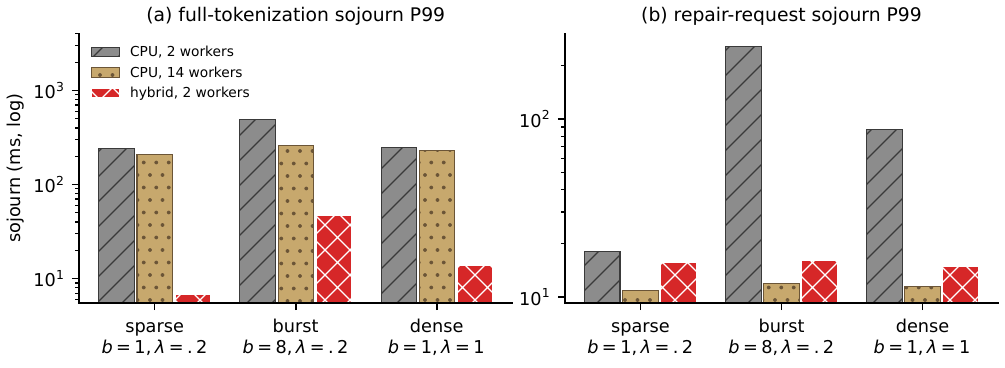}
  \caption{Tail behavior in recorded burst scenarios. Adding CPU
  workers does not remove the session-initialization service-time floor~(a),
  and at low worker counts initialization work drags continuation
  P99 up~(b). A four-core \sys{} repair pool plus one GPU keeps continuation
  and initialization P99 low.
  The host-contention companion panel is
  Figure~\ref{fig:contention} in Appendix~\ref{app:tables}.}
  \label{fig:tail}
\end{figure*}

Single-request medians do not show queueing behavior, so we replay
16 open-loop burst scenarios built from measured session shapes. A CPU-only front end needs seven times as many cores to
keep continuation P99 flat, while initialization P99 stays between
210 and 490\,ms at every tested core count. A four-core \sys{} repair pool
plus one GPU keeps continuation P99 near 15\,ms and initialization
P99 between 6.9 and 46.6\,ms (Figure~\ref{fig:tail}). More CPU workers move the
queueing knee without shortening one large full-context encode.

\begin{figure}[t]
  \centering
  \includegraphics[width=.78\columnwidth]{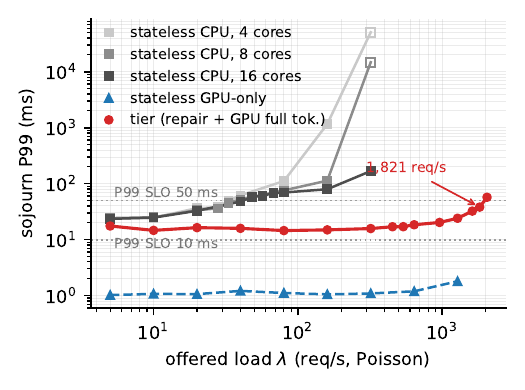}
  \caption{P99 sojourn time versus offered load (Poisson, measured
  mixture, 60\,s steady state per point, and open markers are
  backlogged points reported as-is). The tier reaches
  1{,}821\,requests/s under a 50\,ms P99 objective, while stateless
  CPU configurations saturate at 33--40\,requests/s and only the
  GPU-only front end holds the 10\,ms objective.}
  \label{fig:capacity}
\end{figure}

Figure~\ref{fig:capacity} sweeps Poisson offered load over the
measured request mixture. Session initializations account for 2.3\%
of requests and are sampled from the recorded initialization pool. A stateless CPU front
end cannot meet a 10\,ms P99 objective at any tested core count,
because the $O(N)$ service floor alone exceeds the target. Under a
50\,ms objective, 4, 8, and 16 cores sustain 33, 33, and
40\,requests/s. The tier sustains 1{,}821\,requests/s with four
repair cores and one GPU, more than 45$\times$ the 16-core
stateless capacity. The 45$\times$ ratio compares different
hardware resources. It establishes the capacity of the tested
configurations, and it does not isolate the benefit of session
state from the benefit of the GPU, so we treat it as a
system-capacity result rather than an equal-resource efficiency
result. The bottleneck at 1{,}821\,requests/s is the repair pool, not the
GPU, whose full-tokenization P99 still reads 1.5\,ms there.

A GPU-only stateless front end meets the 10\,ms objective (P99
1.8\,ms at 1{,}280\,requests/s), but it ships the complete text of
every request to the GPU, about 55$\times$ the character volume the
tier sends. Heavy continuation appends of 30--100\,K characters also hold
incremental-repair service P99 at 13--17\,ms independent of load, above the
10\,ms target before queueing begins. The current router does not
redirect these unusually large appends, and delta-size-aware
routing would remove this tail case. Shadow verification at 5\%
sampling re-checked 9{,}161 requests across this sweep with zero
mismatches, on an otherwise idle, preflight-gated machine.

\subsection{vLLM in the loop}
\label{sec:eval-engine}

We submit either text or tier-produced token IDs to vLLM, paired on
identical content with isolated KV prefixes. All runs in this
section use the shipped Rust session store, a single implementation
generation measured end to end. A prior-generation archive of the
same regimes shows gains of the same sign and similar magnitude,
because the store upgrade only accelerates the tier's own
component, which is 0.4--2\% of TTFT. The gain comes from the
engine side.
Differencing the engine's own metrics isolates a front-end segment
that is neither queueing nor prefill, and the tier shrinks that
segment from 37.6--356\,ms per request to 9.8--48\,ms while queue
and prefill stay level (Figure~\ref{fig:engine-rs}a).

\begin{figure}[t]
  \centering
  \includegraphics[width=.92\columnwidth]{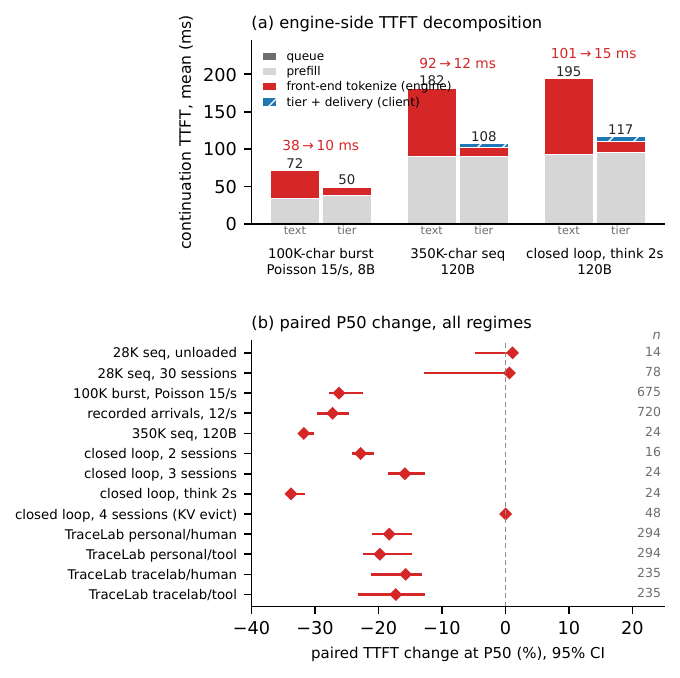}
  \caption{Session-continuation TTFT with vLLM in the loop, text-in vs.\
  tier-in-front on identical paired content. (a)~Engine-side
  decomposition from \texttt{/metrics} differencing shows the gap
  is the front-end tokenization segment, with queue and prefill
  unchanged, and the tier's own client-side work below 6\,ms.
  (b)~Paired P50 change with bootstrap 95\% CIs across all measured
  regimes, where the two-session point uses the control arm with a
  fresh KV cache. Under recorded arrivals P90 reverses ($+$28\%)
  while P50 and P99 improve, a load-generator and batch-shape
  artifact whose decomposition is archived with the run.}
  \label{fig:engine-rs}
\end{figure}

Figure~\ref{fig:engine-rs}b spans six regimes. At 28\,K tokens with
unloaded sequential streams, the two channels tie and the intervals
cross zero, a null result we report deliberately, because
single-stream benchmarks cannot see front-end tokenization. At
100\,K-character contexts with 400-character appends and
15\,requests/s Poisson arrivals, the tier reduces P50 TTFT by 26\%
($n{=}675$). Replaying recorded burst timestamps at a similar mean
rate improves P50 by 27\% and P99 by 23\% (2{,}961 to 2{,}288\,ms), while
P90 reverses by 28\%. The reversal has two archived sources, GIL
contention in the open-loop load generator and a more coherent
burst shape reaching the scheduler, and the engine-measured
end-to-end mean still improves. On gpt-oss-120B with
350\,K-character contexts (95\,K tokens), median TTFT falls by
32\%, with the tier's own component at 0.64\,ms.

Closed-loop experiments preserve causality within each session and
run several sessions concurrently. At 95\,K tokens per session,
median TTFT improves by 23\% with two back-to-back sessions on a
fresh KV cache, 16\% with three, and 34\% with two sessions and a
two-second think time. At four sessions the engine cannot retain
the needed KV state. Prefix reuse collapses, both channels degrade
identically to 66-second full-prefill sojourns within $\pm0.3\%$,
and the tier's own component stays below 1\,ms. Tokenization
placement is irrelevant past the engine's KV capacity.

We also generate closed-loop sessions from TraceLab append and
pacing distributions via the calibrated generator of
\S\ref{sec:workload:sources}. Across 12 paired runs, the median
TTFT improvement is 16--20\%, insensitive to pacing. Compaction
steps land as 0.9--1.3\,s full-prefill requests, while the tier's
state-rebuild component stays below 14\,ms.

\subsection{Cost and resource use}
\label{sec:eval-econ}

Applying the measured per-path costs to the 153{,}529 calls with
complete token accounting gives 13.4\,ms of CPU time per request
for fastokens and 2.3\,ms for \sys{} repair over the same engine.
Here $r$ denotes the request completion rate per inference GPU. At
$r{=}0.5$\,requests/s, those two values correspond to 6.7 and 1.1
front-end CPU cores per 1{,}000 GPUs. GPU full tokenization instead contributes 0.15\,ms of
GPU service time under this mixture. That is router GPU duty rather
than CPU core time, and it normalizes to 0.1 router GPUs per
1{,}000 inference GPUs (Appendix~\ref{app:econ}). We present
this as a rate normalization rather than a cluster design.
Topology, model scale, and batching enter only through $r$.

Direct power measurements show 2.06\,GB/s at 221\,W of CPU package
power for the best 32-core fastokens configuration, and
4.14\,GB/s at 288\,W of GPU board power plus 8\,W of incremental
host power. On the measured corpus the GPU path is about
1.5$\times$ better in tokens per watt, under an accounting that
favors the CPU, since package power excludes DRAM and PSU losses
while the GPU figure is total board power. These figures support
deployment feasibility, while the latency and state-reuse results
remain the primary evidence.

\begin{figure*}[t]
  \centering
  \includegraphics[width=.86\textwidth]{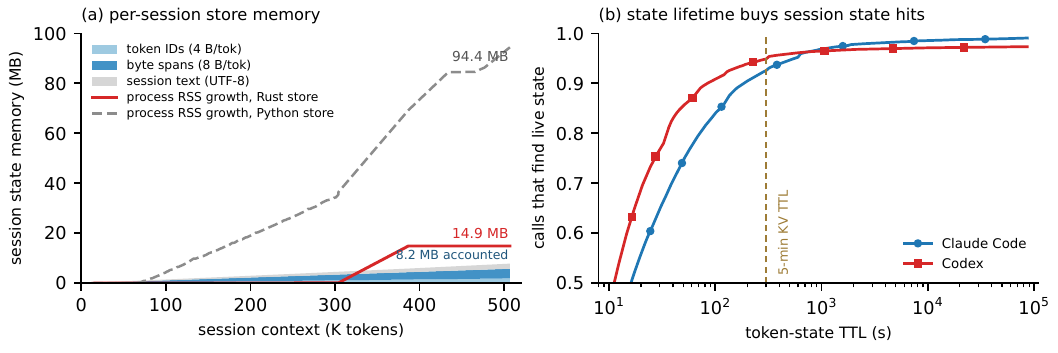}
  \caption{What session state costs and what its lifetime buys.
  (a)~Store memory for one growing session on the replayed real
  trace. Accounted state is ${\sim}16$ bytes per token, the Rust
  store's resident growth stays between 1.2 and 22\,MB per session
  across strains and families, and the prior Python store paid
  70--129\,MB for the same sessions. (b)~Share of the 153{,}951
  trace calls that would find live token state, as a function of
  state retention time. Raising retention from the 5-minute KV
  default to one hour converts 2--6\% of calls from a session state
  miss to a session state hit.}
  \label{fig:memoryttl}
\end{figure*}

Session state is also cheap to keep. Token IDs and byte spans
account for about 16 bytes per token, a growing 500\,K-token
session holds its resident footprint under 15\,MB in the shipped
store, and the state is freed at session end
(Figure~\ref{fig:memoryttl}a). The same figure prices the retention
policy this state makes affordable. In the recorded traces, 5--7\%
of calls arrive more than five minutes after their predecessor and
would miss a KV-lifetime cache, while a token-state TTL of one hour
keeps 97--98\% of calls on a session state hit and a day keeps 97--99\%
(Figure~\ref{fig:memoryttl}b). Holding hours of token state costs
megabytes per session, where the corresponding KV state costs
gigabytes.

\subsection{Runtime verification}
\label{sec:eval-verify}

The shadow verifier earned its place by exposing a
history-dependent bug in a widely deployed production Rust
tokenizer.\footnote{Reported upstream with a deterministic
reproduction.}
During full-scale replay, 39 of 40 sampled sessions diverged from
that engine while every fresh-process differential suite stayed green.
A five-implementation adjudication established that the tier and
the frozen reference agreed. The suspect engine returns different
IDs for the same input after one earlier encode of a prefix at
least 4{,}096 characters long, exactly the serving pattern of
re-encoding a conversation after a small append. The wrong output
has the same token count, so even a length check stays silent.

We also inject 434 faults into 2{,}999 session-shaped requests. The
mutations cover ID substitution, deletion, duplication, adjacent
swap, and a same-length tail resplit, the silent shape of the bug
above. At 10\% sampling, the verifier catches all 39 sampled faults,
produces no false positives on 265 sampled clean requests, and
drops no samples. Detection delay follows the expected geometric
distribution. At 5\% sampling, half of faulty campaigns are
detected within 13 faulty requests and 90\% within 42. The operator
can therefore trade verifier cost for detection latency, while the
drop counter exposes overload.

\section{Related Work}
\label{sec:related}

\paragraph{CPU and segmented tokenization.} HuggingFace
tokenizers~\cite{hftokenizers}, tiktoken~\cite{tiktoken},
Gigatoken~\cite{gigatoken2026}, and vLLM's proposed Rust front
end~\cite{vllmrustfrontend} reduce the cost per input byte with
Rust, SIMD, caching, and parallel CPU execution. They remain
full-context methods from the point of view of an agent turn. Even
a content-cache hit must identify cached pieces through the
complete request. \sys{} instead keeps the token sequence of a
session and repairs only the changed region. The repair-window
engine is pluggable, so faster CPU tokenizers can reduce its local
cost once they pass the same correctness checks.
Appendix~\ref{app:designspace} tabulates the design-space position.

LoPT~\cite{lopt2026} divides one request into overlapping chunks
and matches results at chunk seams. Its goal is within-request CPU
parallelism. \sys{} uses a related overlap idea across time,
matching a fresh window against a session's cached sequence, and
its acceptance condition is different. Our clean-room reproduction
found inputs on which the length threshold stated in its paper
admits a boundary outside the premise of its theorem. That result motivated
the family-specific stable boundary check.

Incremental BPE~\cite{incrementalbpe2026} makes the BPE stage
itself incremental. It maintains, for every prefix of a growing
string, the state needed to extend the tokenization in
$O(\log^2 t)$ per byte, and proves equivalence to the reference
merge procedure when the vocabulary is canonical and its dependency
graph admits a topological order. That line of work deliberately
leaves the surrounding stages unchanged, and its authors note that
normalization and regex pre-tokenization remain offline-oriented.
\sys{} works at the layer above. It keeps the tokenizer
implementation stateless, uses token and byte-span records as
session state, and decides each splice with a per-request check
defined over the full pipeline, so it admits families whose merge
tables fall outside that premise and also supports mid-context
edits.

A parallel line studies BPE as a streaming transduction. Mamouras
et al.~\cite{streamingbpe2026} bound the \emph{delay} of a
merge-rule list, the lookahead needed before a token can be
finalized, and obtain a streaming algorithm whose memory is
independent of the input length. The formal account of BPE by
Berglund and van der Merwe~\cite{berglund2023bpe} shows that a
constant lookahead suffices in principle. These results describe
the merge stage in isolation. The condition \sys{} needs also
covers the pre-tokenizer and the normalizer, which is why its
boundary check is family-specific and is certified per request
rather than assumed.

\paragraph{GPU tokenization.} GPUTOK~\cite{gputok2026},
BlockBPE~\cite{blockbpe2025}, and cuDF subword
tokenization~\cite{cudfsubword} show that subword encoding can use
GPU parallelism. Their supported specifications differ from the
GPT-family reference tokenizers studied here. BlockBPE removes regex
pre-tokenization, cuDF targets WordPiece, and GPUTOK checks a
GPT-2-style byte-level pipeline against its own CPU implementation
of that pipeline. None of them reports agreement with the frozen
production tokenizers evaluated here. \sys{} derives a parallel formulation
of the reference rules and compares both final IDs and intermediate
piece boundaries with frozen reference implementations, across six
GPU-certified production configurations (Qwen3, Llama~3.1, gpt-oss,
and the DeepSeek group of V3, V4-flash, and HY3) spanning four
pattern families. To our knowledge, as of this writing it is the
first GPU tokenizer shown byte-identical to its frozen production
reference under this criterion.

\paragraph{Prefix-cached serving.} vLLM~\cite{vllm2023},
RadixAttention~\cite{sglang2024}, Mooncake~\cite{mooncake2025},
DistServe~\cite{distserve2024}, Splitwise~\cite{splitwise2024},
CacheGen~\cite{cachegen2024}, Dynamo~\cite{dynamo2025}, and
continuous batching~\cite{orca2022} optimize model execution after
token IDs exist, and provider prompt caching exposes the same reuse
to users~\cite{promptcaching}. \sys{} addresses the preceding
front-end step. Reuse has begun to appear at that step as well.
Serving front ends now ship in-memory tokenizer prefix caches that
map a text prefix to the token IDs it produced, keyed by a digest
of that prefix~\cite{llmtokenizer2026,dynamotokenizers2026}. These
caches split a request only at special-token boundaries, where the
pieces compose by construction, so an append that follows ordinary
text yields no reuse. They also live inside a single front-end
process, and they discard the prefix text once it is hashed, so a
hit is trusted rather than re-checked. \sys{} instead keeps token
IDs together with their source byte spans as session state, matches
that state against the request text, and accepts a splice wherever
the per-request boundary check of \S\ref{sec:warm} passes. It uses
token-ID request interfaces that serving
stacks already expose, such as vLLM's \texttt{prompt\_token\_ids}
(\S\ref{sec:integration}) and Dynamo's pre-tokenized backend
requests, so it deploys without changing the model
scheduler.

\paragraph{Agent workloads.} Public inference
traces~\cite{burstgpt2025,mooncake2025} report request sizes and
arrival rates but generally not the relation between a session's
previous context and its new text. TraceLab~\cite{tracelab2026} and
CacheWise~\cite{cachewise2026} characterize coding-agent sessions
for KV-cache management, and their measurements independently
support the long-context, small-append, high-hit-rate pattern used
in this paper. TraceLab discards text, so it cannot replay
tokenization. Our traces retain counts only for privacy, while the
public autonomous-agent corpus~\cite{swebenchpro_traces} releases
message text at preserved lengths, with filler substituted for
redacted spans. CPU interference
studies~\cite{cpuslowdown2026} corroborate the contention regime of
\S\ref{sec:eval-tail}.

\paragraph{Differential validation.} Differential testing has a
long history in compilers~\cite{yang2011csmith}, and translation
validation checks a produced result against a specification without
proving the entire implementation~\cite{pnueli1998tv,necula2000tv}.
\sys{} applies this division to tokenization. A theorem justifies
an accepted repair splice in a family-level model, version-pinned
differential campaigns check the frozen implementation, and runtime
sampling covers history-dependent failures. The paper does not
claim formal verification of the complete tokenizer stack
(Appendix~\ref{app:guarantee}).

\section{Limitations}
\label{sec:discussion}

We collect the system's limitations here, from the physical floor
of the GPU path to the boundary of what zero divergence
establishes.

\paragraph{What limits the GPU path.}
\label{sec:eval-limits}
The GPU tokenizer uses less than 80\,GB/s of DRAM traffic, about
4\% of the card's bandwidth, at low SM occupancy. Raising the
RTX~PRO~6000 power limit from 450 to 600\,W changes no result. The
limiting resource is the dependency chain of exact BPE merges. Each
merge is a dependent lookup whose result decides the next, so the
floor is chain depth times memory round-trip, a latency roofline.

Three measurements place the implementation at that floor. First,
measurements across five GPUs spanning three architecture
generations follow sustained clock more closely than SM count,
bandwidth, or price, and the \$2{,}000 consumer RTX~5090 runs
11--17\% faster than the server-class card
(Appendix~\ref{app:crossgen}). Second,
optimizations that pay on throughput-bound kernels measurably do
not pay here. L2 residency pinning loses 11--12\%,
per-architecture launch retuning moves results within $\pm1$\%, and
an on-GPU piece-memoization prototype with a 96.8\% hit rate
returns $+0.8$\%. Third, a gated prototype that commits rank
plateaus in parallel gains 42\% CJK throughput but diverges from
the reference on 4.0\% of Qwen pieces and 0.34\% of Llama pieces.
We do not use it. The remaining chain is the price of exactness,
and it implies a router-resident tier wants one cheap high-clock
card rather than a datacenter GPU.

\paragraph{Coverage.} Incremental repair requires a supported family and a
stable boundary inside the matched region. Two of the 17 families
we examined do not satisfy the current predicate and always use
full tokenization. WordPiece follows a different boundary argument
and remains on the CPU path in the current system. Added-token extraction and one NFC normalizer
also stay on the CPU. These fallbacks preserve output correctness
and shrink the fraction of traffic that receives the fast path.

\paragraph{Session-state construction.} The GPU encoder currently
emits token IDs without source byte spans. A request that
initializes session state therefore runs the reference tokenizer to
obtain spans. In the measured engine-in-loop runs this adds at most
14\,ms to a request whose model prefill takes 0.9--1.3\,s.
Exporting spans on the GPU is the highest-leverage missing piece of
engineering. Cross-worker state transfer is the other planned
extension. Today a request that loses worker affinity becomes a
session state miss and stays correct (\S\ref{sec:state}).

\paragraph{Workload scope.} The personal trace panel covers six
users and nine machines and is concentrated on coding agents.
Provider aggregates, a public autonomous-agent trace, and TraceLab
reduce the risk that one client or group determines the result, but
they do not cover all agent applications. The benefit also depends
on context length, since a fully prewarmed content cache wins on
quiet cores at 100\,K characters and below (\S\ref{sec:eval-warm})
and \sys{} grows stronger as the context grows relative to the
append.

\paragraph{Validation boundary.} Zero divergence in the reported
campaigns is evidence for the tested artifacts, not a proof about
future tokenizer versions. Each new snapshot must repeat family
admission and differential validation. The shadow verifier stays
deployed, because one observed failure depends on prior request
history and cannot be ruled out by fresh-process testing alone. All
personal traces were collected with consent under a count-only
discipline (Appendix~\ref{app:workload}), and the
history-dependent-tokenizer bug was reported upstream with a
deterministic reproduction before publication.

\section{Conclusion}
\label{sec:conclusion}

Agentic serving repeatedly submits long contexts after small
updates. KV prefix caching exploits that structure only after the
front end has produced token IDs, and the tokenizer caches that
front ends ship today reuse earlier work only inside one process
and only up to the last special-token boundary. \sys{} carries
the same stateful view into tokenization. It repairs continuing sessions
at a checked boundary, sends large session initializations through an exact
GPU path, and falls back to the reference implementation whenever a
fast-path condition is unavailable. The result is a front end whose
common-case work follows the change in the request, with output
compatible with existing models and prefix caches.

\paragraph{Deployment.} Our evaluation places one GPU at the
serving entry point and uses session affinity across repair
workers. The tokenizer consumes little GPU bandwidth, so colocation
with an inference GPU is possible, and the cross-generation results
suggest one high-clock consumer card suffices
(\S\ref{sec:eval-limits}). A production deployment would also need
state replication, admission control, and a state-lifetime policy,
none of which changes the tokenization algorithms, though each can
affect the session state hit rate.

\paragraph{Delta-aware routing.} Continuation appends beyond ${\sim}30$\,K
characters hold incremental-repair service P99 above a 10\,ms objective
(\S\ref{sec:eval-tail}), and the shipped router does not redirect
them. Head-to-head measurements show where each path is optimal.
Repair cost follows the append and is flat in context, while a GPU
rebuild of the complete context costs 1.1--3.2\,ms up to window-max
scale, so rerouting appends of 50\,K characters and above would cut
their service time by 6--28$\times$ relative to the shipped repair
path. Figure~\ref{fig:routing-phase} in Appendix~\ref{app:routes}
maps the measured routing regions, and the three explored routes
are detailed there. Only 0.8\% of measured
appends are that large, and these numbers use exploration-prototype
accounting, so the shipped default remains repair for every session
continuation until the rerouted path passes the same certification
battery.

\bibliographystyle{ACM-Reference-Format}
\bibliography{refs}

%%% -*-BibTeX-*-
%%% Do NOT edit. File created by BibTeX with style
%%% ACM-Reference-Format-Journals [18-Jan-2012].

\begin{thebibliography}{40}

%%% ====================================================================
%%% NOTE TO THE USER: you can override these defaults by providing
%%% customized versions of any of these macros before the \bibliography
%%% command.  Each of them MUST provide its own final punctuation,
%%% except for \shownote{}, \showDOI{}, and \showURL{}.  The latter two
%%% do not use final punctuation, in order to avoid confusing it with
%%% the Web address.
%%%
%%% To suppress output of a particular field, define its macro to expand
%%% to an empty string, or better, \unskip, like this:
%%%
%%% \newcommand{\showDOI}[1]{\unskip}   % LaTeX syntax
%%%
%%% \def \showDOI #1{\unskip}           % plain TeX syntax
%%%
%%% ====================================================================

\ifx \showCODEN    \undefined \def \showCODEN     #1{\unskip}     \fi
\ifx \showDOI      \undefined \def \showDOI       #1{#1}\fi
\ifx \showISBNx    \undefined \def \showISBNx     #1{\unskip}     \fi
\ifx \showISBNxiii \undefined \def \showISBNxiii  #1{\unskip}     \fi
\ifx \showISSN     \undefined \def \showISSN      #1{\unskip}     \fi
\ifx \showLCCN     \undefined \def \showLCCN      #1{\unskip}     \fi
\ifx \shownote     \undefined \def \shownote      #1{#1}          \fi
\ifx \showarticletitle \undefined \def \showarticletitle #1{#1}   \fi
\ifx \showURL      \undefined \def \showURL       {\relax}        \fi
% The following commands are used for tagged output and should be
% invisible to TeX
\providecommand\bibfield[2]{#2}
\providecommand\bibinfo[2]{#2}
\providecommand\natexlab[1]{#1}
\providecommand\showeprint[2][]{arXiv:#2}

\bibitem[{Anthropic}(2024)]%
        {promptcaching}
\bibfield{author}{\bibinfo{person}{{Anthropic}}.}
  \bibinfo{year}{2024}\natexlab{}.
\newblock \bibinfo{title}{Prompt Caching with {Claude}}.
\newblock
\newblock
\newblock
\shownote{Explicit 5-minute and 1-hour cache TTLs}.


\bibitem[{Anthropic}(2026)]%
        {claude5window}
\bibfield{author}{\bibinfo{person}{{Anthropic}}.}
  \bibinfo{year}{2026}\natexlab{}.
\newblock \bibinfo{title}{Models overview}.
\newblock
  \bibinfo{howpublished}{\url{https://platform.claude.com/docs/en/about-claude/models/overview}}.
\newblock
\newblock
\shownote{Claude Fable 5 and Claude Opus 5: 1M token context window. Accessed
  July 30, 2026}.


\bibitem[Berglund and van~der Merwe(2023)]%
        {berglund2023bpe}
\bibfield{author}{\bibinfo{person}{Martin Berglund} {and}
  \bibinfo{person}{Brink van~der Merwe}.} \bibinfo{year}{2023}\natexlab{}.
\newblock \bibinfo{title}{Formalizing {BPE} Tokenization}.
\newblock
\newblock
\newblock
\shownote{arXiv:2309.08715}.


\bibitem[Chung et~al\mbox{.}(2026)]%
        {cpuslowdown2026}
\bibfield{author}{\bibinfo{person}{Euijun Chung}, \bibinfo{person}{Yuxiao Jia},
  \bibinfo{person}{Aaron Jezghani}, {and} \bibinfo{person}{Hyesoon Kim}.}
  \bibinfo{year}{2026}\natexlab{}.
\newblock \bibinfo{title}{Characterizing {CPU}-Induced Slowdowns in Multi-{GPU}
  {LLM} Inference}.
\newblock
\newblock
\newblock
\shownote{arXiv:2603.22774}.


\bibitem[{DeepSeek-AI}(2024a)]%
        {deepseekv3}
\bibfield{author}{\bibinfo{person}{{DeepSeek-AI}}.}
  \bibinfo{year}{2024}\natexlab{a}.
\newblock \bibinfo{title}{{DeepSeek-V3} Technical Report}.
\newblock
\newblock
\newblock
\shownote{arXiv:2412.19437}.


\bibitem[{DeepSeek-AI}(2024b)]%
        {deepseekv3tok}
\bibfield{author}{\bibinfo{person}{{DeepSeek-AI}}.}
  \bibinfo{year}{2024}\natexlab{b}.
\newblock \bibinfo{title}{{DeepSeek-V3} tokenizer.json}.
\newblock
\newblock
\newblock
\shownote{HuggingFace model artifact, revision e815299b.
  \url{https://huggingface.co/deepseek-ai/DeepSeek-V3}}.


\bibitem[Deng et~al\mbox{.}(2025)]%
        {swebenchpro2025}
\bibfield{author}{\bibinfo{person}{Xiang Deng}, \bibinfo{person}{Jeff Da},
  \bibinfo{person}{Edwin Pan}, {et~al\mbox{.}}}
  \bibinfo{year}{2025}\natexlab{}.
\newblock \bibinfo{title}{{SWE-Bench Pro}: Can {AI} Agents Solve Long-Horizon
  Software Engineering Tasks?}
\newblock
\newblock
\newblock
\shownote{arXiv:2509.16941}.


\bibitem[{fastokens contributors}(2026)]%
        {fastokens}
\bibfield{author}{\bibinfo{person}{{fastokens contributors}}.}
  \bibinfo{year}{2026}\natexlab{}.
\newblock \bibinfo{title}{fastokens: A Fast {BPE} Tokenizer with a {Rust}
  Backend}.
\newblock
\newblock
\newblock
\shownote{Version 0.2.0, Apache-2.0. Supported by vLLM v0.23.0 and later as an
  opt-in tokenizer backend that must be installed separately.
  \url{https://github.com/crusoecloud/fastokens}}.


\bibitem[{HuggingFace}(2019)]%
        {hftokenizers}
\bibfield{author}{\bibinfo{person}{{HuggingFace}}.}
  \bibinfo{year}{2019}\natexlab{}.
\newblock \bibinfo{title}{HuggingFace Tokenizers}.
\newblock
\newblock
\newblock
\shownote{Rust library; version 0.22.2 in the frozen environment of this paper.
  \url{https://github.com/huggingface/tokenizers}}.


\bibitem[{Inferact}(2026)]%
        {swebenchpro_traces}
\bibfield{author}{\bibinfo{person}{{Inferact}}.}
  \bibinfo{year}{2026}\natexlab{}.
\newblock \bibinfo{title}{codex\_swebenchpro\_traces: Agentic Workload Traces
  of {Codex} on {SWE-Bench Pro}}.
\newblock
\newblock
\newblock
\shownote{HuggingFace dataset, MIT license. Redacted spans are replaced by
  length-preserving filler text. File codex\_swebenchpro.json, SHA-256 prefix
  670f1ae8325fd70a, retrieved July 13, 2026}.


\bibitem[Jawa(2021)]%
        {cudfsubword}
\bibfield{author}{\bibinfo{person}{Vibhu Jawa}.}
  \bibinfo{year}{2021}\natexlab{}.
\newblock \bibinfo{title}{Run State of the Art {NLP} Workloads at Scale with
  {RAPIDS}, {HuggingFace}, and {Dask}}.
\newblock
\newblock
\newblock
\shownote{NVIDIA Developer Blog; {BERT} WordPiece batch tokenization. Accessed
  July 31, 2026.
  \url{https://developer.nvidia.com/blog/run-state-of-the-art-nlp-workloads-at-scale-with-rapids-huggingface-and-dask/}}.


\bibitem[Jiang and Gong(2026)]%
        {incrementalbpe2026}
\bibfield{author}{\bibinfo{person}{Shenghu Jiang} {and} \bibinfo{person}{Ruihao
  Gong}.} \bibinfo{year}{2026}\natexlab{}.
\newblock \showarticletitle{Incremental BPE Tokenization}. In
  \bibinfo{booktitle}{\emph{Proceedings of ICML}}.
\newblock
\newblock
\shownote{arXiv:2605.30813}.


\bibitem[Kadamba and Jaisankar(2026)]%
        {gputok2026}
\bibfield{author}{\bibinfo{person}{Venu~Gopal Kadamba} {and}
  \bibinfo{person}{Kanishkha Jaisankar}.} \bibinfo{year}{2026}\natexlab{}.
\newblock \bibinfo{title}{{GPUTOK}: {GPU} Accelerated Byte Level {BPE}
  Tokenization}.
\newblock
\newblock
\newblock
\shownote{arXiv:2603.02597}.


\bibitem[Kwon et~al\mbox{.}(2023)]%
        {vllm2023}
\bibfield{author}{\bibinfo{person}{Woosuk Kwon}, \bibinfo{person}{Zhuohan Li},
  \bibinfo{person}{Siyuan Zhuang}, \bibinfo{person}{Ying Sheng},
  \bibinfo{person}{Lianmin Zheng}, \bibinfo{person}{Cody~Hao Yu},
  \bibinfo{person}{Joseph Gonzalez}, \bibinfo{person}{Hao Zhang}, {and}
  \bibinfo{person}{Ion Stoica}.} \bibinfo{year}{2023}\natexlab{}.
\newblock \showarticletitle{Efficient Memory Management for Large Language
  Model Serving with {PagedAttention}}. In
  \bibinfo{booktitle}{\emph{Proceedings of SOSP}}.
\newblock


\bibitem[Liu et~al\mbox{.}(2024)]%
        {cachegen2024}
\bibfield{author}{\bibinfo{person}{Yuhan Liu}, \bibinfo{person}{Hanchen Li},
  \bibinfo{person}{Yihua Cheng}, {et~al\mbox{.}}}
  \bibinfo{year}{2024}\natexlab{}.
\newblock \showarticletitle{{CacheGen}: {KV} Cache Compression and Streaming
  for Fast Large Language Model Serving}. In
  \bibinfo{booktitle}{\emph{Proceedings of SIGCOMM}}.
\newblock


\bibitem[Mamouras et~al\mbox{.}(2026)]%
        {streamingbpe2026}
\bibfield{author}{\bibinfo{person}{Konstantinos Mamouras},
  \bibinfo{person}{Angela~W. Li}, {and} \bibinfo{person}{Yudi Yang}.}
  \bibinfo{year}{2026}\natexlab{}.
\newblock \showarticletitle{An Efficient Algorithm for Streaming {BPE}
  Tokenization}.
\newblock \bibinfo{journal}{\emph{Proceedings of the ACM on Programming
  Languages}} \bibinfo{volume}{10}, \bibinfo{number}{PLDI}, Article
  \bibinfo{articleno}{252} (\bibinfo{year}{2026}),
  \bibinfo{numpages}{2085--2108}~pages.
\newblock
\urldef\tempurl%
\url{https://doi.org/10.1145/3808330}
\showDOI{\tempurl}


\bibitem[Necula(2000)]%
        {necula2000tv}
\bibfield{author}{\bibinfo{person}{George~C. Necula}.}
  \bibinfo{year}{2000}\natexlab{}.
\newblock \showarticletitle{Translation Validation for an Optimizing Compiler}.
  In \bibinfo{booktitle}{\emph{Proceedings of the ACM SIGPLAN Conference on
  Programming Language Design and Implementation (PLDI)}}.
\newblock


\bibitem[{NVIDIA}(2024)]%
        {nvidia_blackwell}
\bibfield{author}{\bibinfo{person}{{NVIDIA}}.} \bibinfo{year}{2024}\natexlab{}.
\newblock \bibinfo{title}{{NVIDIA} {Blackwell} Platform: {GB200} {NVL72}}.
\newblock
\newblock
\newblock
\shownote{Vendor-reported claim: up to 30$\times$ LLM-inference throughput vs.\
  the same number of {H100} GPUs; named mechanisms are FP4 in a
  second-generation Transformer Engine and fifth-generation NVLink. Accessed
  July 31, 2026.
  \url{https://nvidianews.nvidia.com/news/nvidia-blackwell-platform-arrives-to-power-a-new-era-of-computing}}.


\bibitem[{NVIDIA}(2025)]%
        {dynamo2025}
\bibfield{author}{\bibinfo{person}{{NVIDIA}}.} \bibinfo{year}{2025}\natexlab{}.
\newblock \bibinfo{title}{{NVIDIA} Dynamo: A Datacenter-Scale Distributed
  Inference Serving Framework}.
\newblock
\newblock
\newblock
\shownote{Router documentation: backend handlers receive pre-tokenized
  requests}.


\bibitem[{NVIDIA}(2026)]%
        {dynamotokenizers2026}
\bibfield{author}{\bibinfo{person}{{NVIDIA}}.} \bibinfo{year}{2026}\natexlab{}.
\newblock \bibinfo{title}{{dynamo-tokenizers}: Tokenizer Backends and Prefix
  Cache for the {NVIDIA} Dynamo Front End}.
\newblock
\newblock
\newblock
\shownote{Rust crate. In-memory prefix cache keyed by a digest of the text
  prefix and split at special-token boundaries; enabled by default since June
  2026. \url{https://github.com/ai-dynamo/frontend-crates}}.


\bibitem[{OpenAI}(2023)]%
        {tiktoken}
\bibfield{author}{\bibinfo{person}{{OpenAI}}.} \bibinfo{year}{2023}\natexlab{}.
\newblock \bibinfo{title}{tiktoken: A Fast {BPE} Tokeniser for Use with
  {OpenAI}'s Models}.
\newblock
\newblock
\newblock
\shownote{\url{https://github.com/openai/tiktoken}}.


\bibitem[{OpenAI}(2026)]%
        {gpt56sol}
\bibfield{author}{\bibinfo{person}{{OpenAI}}.} \bibinfo{year}{2026}\natexlab{}.
\newblock \bibinfo{title}{{GPT-5.6 Sol Model} | {OpenAI API}}.
\newblock
  \bibinfo{howpublished}{\url{https://developers.openai.com/api/docs/models/gpt-5.6-sol}}.
\newblock
\newblock
\shownote{Context window: 1,050,000 tokens. Accessed July 30, 2026}.


\bibitem[Patel et~al\mbox{.}(2024)]%
        {splitwise2024}
\bibfield{author}{\bibinfo{person}{Pratyush Patel}, \bibinfo{person}{Esha
  Choukse}, \bibinfo{person}{Chaojie Zhang}, {et~al\mbox{.}}}
  \bibinfo{year}{2024}\natexlab{}.
\newblock \showarticletitle{Splitwise: Efficient Generative {LLM} Inference
  Using Phase Splitting}. In \bibinfo{booktitle}{\emph{Proceedings of ISCA}}.
\newblock
\newblock
\shownote{Azure LLM inference traces}.


\bibitem[Pnueli et~al\mbox{.}(1998)]%
        {pnueli1998tv}
\bibfield{author}{\bibinfo{person}{Amir Pnueli}, \bibinfo{person}{Michael
  Siegel}, {and} \bibinfo{person}{Eli Singerman}.}
  \bibinfo{year}{1998}\natexlab{}.
\newblock \showarticletitle{Translation Validation}. In
  \bibinfo{booktitle}{\emph{Tools and Algorithms for the Construction and
  Analysis of Systems (TACAS)}}.
\newblock


\bibitem[Qin et~al\mbox{.}(2025)]%
        {mooncake2025}
\bibfield{author}{\bibinfo{person}{Ruoyu Qin}, \bibinfo{person}{Zheming Li},
  \bibinfo{person}{Weiran He}, \bibinfo{person}{Jialei Cui},
  \bibinfo{person}{Feng Ren}, \bibinfo{person}{Mingxing Zhang},
  \bibinfo{person}{Yongwei Wu}, \bibinfo{person}{Weimin Zheng}, {and}
  \bibinfo{person}{Xinran Xu}.} \bibinfo{year}{2025}\natexlab{}.
\newblock \showarticletitle{Mooncake: Trading More Storage for Less Computation
  --- A {KVCache}-Centric Architecture for Serving {LLM} Chatbot}. In
  \bibinfo{booktitle}{\emph{Proceedings of the 23rd USENIX Conference on File
  and Storage Technologies (FAST)}}. \bibinfo{publisher}{USENIX Association},
  \bibinfo{pages}{155--170}.
\newblock


\bibitem[Radford et~al\mbox{.}(2019)]%
        {gpt2}
\bibfield{author}{\bibinfo{person}{Alec Radford}, \bibinfo{person}{Jeffrey Wu},
  \bibinfo{person}{Rewon Child}, \bibinfo{person}{David Luan},
  \bibinfo{person}{Dario Amodei}, {and} \bibinfo{person}{Ilya Sutskever}.}
  \bibinfo{year}{2019}\natexlab{}.
\newblock \bibinfo{title}{Language Models are Unsupervised Multitask Learners}.
\newblock
\newblock
\newblock
\shownote{GPT-2; byte-level BPE with regex pre-tokenization.
  \url{https://github.com/openai/gpt-2}}.


\bibitem[R{\o}d(2026)]%
        {gigatoken2026}
\bibfield{author}{\bibinfo{person}{Marcel R{\o}d}.}
  \bibinfo{year}{2026}\natexlab{}.
\newblock \bibinfo{title}{Gigatoken: {SIMD} and Cache Hierarchies for 1000x
  Faster Byte-Pair Encoding Tokenization on Modern {CPUs}}.
\newblock
\newblock
\newblock
\shownote{Audited at version 0.9.0, commit 0d9765fa.
  \url{https://github.com/marcelroed/gigatoken}}.


\bibitem[Shao et~al\mbox{.}(2026)]%
        {lopt2026}
\bibfield{author}{\bibinfo{person}{Wei Shao}, \bibinfo{person}{Lingchao Zheng},
  \bibinfo{person}{Pengyu Wang}, \bibinfo{person}{Peizhen Zheng},
  \bibinfo{person}{Jun Li}, {and} \bibinfo{person}{Yuwei Fan}.}
  \bibinfo{year}{2026}\natexlab{}.
\newblock \showarticletitle{LoPT: Lossless Parallel Tokenization Acceleration
  for Long Context Inference of Large Language Model}. In
  \bibinfo{booktitle}{\emph{Proceedings of the 64th Annual Meeting of the
  Association for Computational Linguistics (Volume 1: Long Papers)}}.
  \bibinfo{publisher}{Association for Computational Linguistics},
  \bibinfo{address}{San Diego, California, United States},
  \bibinfo{pages}{33107--33122}.
\newblock
\urldef\tempurl%
\url{https://doi.org/10.18653/v1/2026.acl-long.1529}
\showDOI{\tempurl}


\bibitem[{The llm-tokenizer authors}(2026)]%
        {llmtokenizer2026}
\bibfield{author}{\bibinfo{person}{{The llm-tokenizer authors}}.}
  \bibinfo{year}{2026}\natexlab{}.
\newblock \bibinfo{title}{{llm-tokenizer}: A Caching Tokenizer Layer for {LLM}
  Gateways}.
\newblock
\newblock
\newblock
\shownote{Rust crate, first released January 2026. In-memory caching of
  tokenization results. \url{https://github.com/lightseekorg/smg}}.


\bibitem[Tiwari et~al\mbox{.}(2026)]%
        {cachewise2026}
\bibfield{author}{\bibinfo{person}{Shubham Tiwari}, \bibinfo{person}{Tapan
  Chugh}, \bibinfo{person}{Nash Rickert}, \bibinfo{person}{Simon Peter},
  \bibinfo{person}{Ratul Mahajan}, {and} \bibinfo{person}{Haiying Shen}.}
  \bibinfo{year}{2026}\natexlab{}.
\newblock \bibinfo{title}{{CacheWise}: Understanding Workloads and Optimizing
  {KVCache} Management for Efficiently Serving {LLM} Coding Agents}.
\newblock
\newblock
\newblock
\shownote{arXiv:2606.16824}.


\bibitem[{vLLM contributors}(2026)]%
        {vllmrustfrontend}
\bibfield{author}{\bibinfo{person}{{vLLM contributors}}.}
  \bibinfo{year}{2026}\natexlab{}.
\newblock \bibinfo{title}{{[RFC]}: Rust front-end}.
\newblock
\newblock
\newblock
\shownote{GitHub issue 40846, open; proposed as an experimental preview behind
  an opt-in flag, with the Python front end kept as the default.
  \url{https://github.com/vllm-project/vllm/issues/40846}; parity tracking
  \#44280}.


\bibitem[Wang et~al\mbox{.}(2025)]%
        {burstgpt2025}
\bibfield{author}{\bibinfo{person}{Yuxin Wang}, \bibinfo{person}{Yuhan Chen},
  \bibinfo{person}{Zeyu Li}, \bibinfo{person}{Xueze Kang},
  \bibinfo{person}{Yuchu Fang}, \bibinfo{person}{Yeju Zhou},
  \bibinfo{person}{Yang Zheng}, \bibinfo{person}{Zhenheng Tang},
  \bibinfo{person}{Xin He}, \bibinfo{person}{Rui Guo}, \bibinfo{person}{Xin
  Wang}, \bibinfo{person}{Qiang Wang}, \bibinfo{person}{Amelie~Chi Zhou}, {and}
  \bibinfo{person}{Xiaowen Chu}.} \bibinfo{year}{2025}\natexlab{}.
\newblock \showarticletitle{{BurstGPT}: A Real-World Workload Dataset to
  Optimize {LLM} Serving Systems}. In \bibinfo{booktitle}{\emph{Proceedings of
  the 31st ACM SIGKDD Conference on Knowledge Discovery and Data Mining
  (KDD)}}. \bibinfo{pages}{5831--5841}.
\newblock
\urldef\tempurl%
\url{https://doi.org/10.1145/3711896.3737413}
\showDOI{\tempurl}


\bibitem[Yang et~al\mbox{.}(2024)]%
        {sweagent2024}
\bibfield{author}{\bibinfo{person}{John Yang}, \bibinfo{person}{Carlos~E.
  Jimenez}, \bibinfo{person}{Alexander Wettig}, \bibinfo{person}{Kilian
  Lieret}, \bibinfo{person}{Shunyu Yao}, \bibinfo{person}{Karthik Narasimhan},
  {and} \bibinfo{person}{Ofir Press}.} \bibinfo{year}{2024}\natexlab{}.
\newblock \showarticletitle{{SWE-agent}: Agent-Computer Interfaces Enable
  Automated Software Engineering}. In \bibinfo{booktitle}{\emph{Proceedings of
  NeurIPS}}.
\newblock
\newblock
\shownote{arXiv:2405.15793}.


\bibitem[Yang et~al\mbox{.}(2011)]%
        {yang2011csmith}
\bibfield{author}{\bibinfo{person}{Xuejun Yang}, \bibinfo{person}{Yang Chen},
  \bibinfo{person}{Eric Eide}, {and} \bibinfo{person}{John Regehr}.}
  \bibinfo{year}{2011}\natexlab{}.
\newblock \showarticletitle{Finding and Understanding Bugs in {C} Compilers}.
  In \bibinfo{booktitle}{\emph{Proceedings of the 32nd ACM SIGPLAN Conference
  on Programming Language Design and Implementation (PLDI)}}.
\newblock


\bibitem[Yao et~al\mbox{.}(2023)]%
        {react2023}
\bibfield{author}{\bibinfo{person}{Shunyu Yao}, \bibinfo{person}{Jeffrey Zhao},
  \bibinfo{person}{Dian Yu}, \bibinfo{person}{Nan Du}, \bibinfo{person}{Izhak
  Shafran}, \bibinfo{person}{Karthik Narasimhan}, {and} \bibinfo{person}{Yuan
  Cao}.} \bibinfo{year}{2023}\natexlab{}.
\newblock \showarticletitle{{ReAct}: Synergizing Reasoning and Acting in
  Language Models}. In \bibinfo{booktitle}{\emph{Proceedings of ICLR}}.
\newblock
\newblock
\shownote{arXiv:2210.03629}.


\bibitem[You(2025)]%
        {blockbpe2025}
\bibfield{author}{\bibinfo{person}{Amos You}.} \bibinfo{year}{2025}\natexlab{}.
\newblock \bibinfo{title}{BlockBPE: Parallel BPE Tokenization}.
\newblock
\newblock
\newblock
\shownote{arXiv:2507.11941; ES-FoMo III workshop at ICML 2025}.


\bibitem[Yu et~al\mbox{.}(2022)]%
        {orca2022}
\bibfield{author}{\bibinfo{person}{Gyeong-In Yu}, \bibinfo{person}{Joo~Seong
  Jeong}, \bibinfo{person}{Geon-Woo Kim}, \bibinfo{person}{Soojeong Kim}, {and}
  \bibinfo{person}{Byung-Gon Chun}.} \bibinfo{year}{2022}\natexlab{}.
\newblock \showarticletitle{Orca: A Distributed Serving System for
  Transformer-Based Generative Models}. In
  \bibinfo{booktitle}{\emph{Proceedings of the 16th USENIX Symposium on
  Operating Systems Design and Implementation (OSDI)}}.
  \bibinfo{publisher}{USENIX Association}, \bibinfo{pages}{521--538}.
\newblock


\bibitem[Zheng et~al\mbox{.}(2024)]%
        {sglang2024}
\bibfield{author}{\bibinfo{person}{Lianmin Zheng}, \bibinfo{person}{Liangsheng
  Yin}, \bibinfo{person}{Zhiqiang Xie}, {et~al\mbox{.}}}
  \bibinfo{year}{2024}\natexlab{}.
\newblock \showarticletitle{{SGLang}: Efficient Execution of Structured
  Language Model Programs}. In \bibinfo{booktitle}{\emph{Proceedings of
  NeurIPS}}.
\newblock
\newblock
\shownote{RadixAttention prefix caching}.


\bibitem[Zhong et~al\mbox{.}(2024)]%
        {distserve2024}
\bibfield{author}{\bibinfo{person}{Yinmin Zhong}, \bibinfo{person}{Shengyu
  Liu}, \bibinfo{person}{Junda Chen}, \bibinfo{person}{Jianbo Hu},
  \bibinfo{person}{Yibo Zhu}, \bibinfo{person}{Xuanzhe Liu},
  \bibinfo{person}{Xin Jin}, {and} \bibinfo{person}{Hao Zhang}.}
  \bibinfo{year}{2024}\natexlab{}.
\newblock \showarticletitle{{DistServe}: Disaggregating Prefill and Decoding
  for Goodput-Optimized Large Language Model Serving}. In
  \bibinfo{booktitle}{\emph{Proceedings of the 18th USENIX Symposium on
  Operating Systems Design and Implementation (OSDI)}}.
  \bibinfo{publisher}{USENIX Association}, \bibinfo{pages}{193--210}.
\newblock


\bibitem[Zhu et~al\mbox{.}(2026)]%
        {tracelab2026}
\bibfield{author}{\bibinfo{person}{Kan Zhu}, \bibinfo{person}{Mathew Jacob},
  \bibinfo{person}{Chenxi Ma}, \bibinfo{person}{Yi Pan},
  \bibinfo{person}{Stephanie Wang}, \bibinfo{person}{Arvind Krishnamurthy},
  {and} \bibinfo{person}{Baris Kasikci}.} \bibinfo{year}{2026}\natexlab{}.
\newblock \bibinfo{title}{{TraceLab}: Characterizing Coding Agent Workloads for
  {LLM} Serving}.
\newblock
\newblock
\newblock
\shownote{arXiv:2606.30560; dataset release v0.0.1, CC BY 4.0,
  syfi\_coding\_trace SHA-256 prefix 9d265eae69a31cae.
  \url{https://github.com/uw-syfi/TraceLab}}.


\end{thebibliography}

\appendix
\section{The Splice Theorem}
\label{app:proof}

This appendix states and proves the losslessness theorem behind the
incremental repair's splice certificate (\S\ref{sec:warm}). Full per-family
discharge proofs, the v1$\to$v2 repair history (the digit-grouping
counterexample of \S\ref{sec:warm}), and the adversarial discovery
battery are in the companion document shipped with the artifact. The
per-family results are summarized in \S\ref{app:discovery}.

\subsection{Setting}

\begin{definition}[Token records]\label{d:records}
For a string $S$, the no-post-processing tokenization is an ordered
sequence $F(S) = (e_1,\dots,e_n)$ of \emph{records}
$e_i = (\mathrm{id}_i, a_i, b_i)$ with $[a_i,b_i) \subseteq [0,|S|)$
the source span reported by the offset mapping. Spans may leave gaps
(dropped whitespace), may repeat (several byte-level tokens of one
multi-byte character), and never partition $[0,|S|)$ in general.
Sequence \emph{index} is the only structure the theory uses.
\end{definition}

\begin{definition}[Front end and units]\label{d:frontend}
Write the pipeline as $F = \ME^\ast \circ \FE$:
\begin{itemize}
  \item $\FE$ (\emph{front end}) maps $S$ to an ordered sequence of
    \emph{units} $u_1,\dots,u_m$, each
    $u = (\mathrm{kind}, \mathrm{payload}, I)$ with
    $\mathrm{kind} \in \{\text{piece}, \text{added literal}\}$,
    $\mathrm{payload}$ the normalized text the model stage will
    consume, and $I \subseteq [0,|S|)$ the source interval. Source
    intervals are ordered and non-over\-lap\-ping but need not cover $S$.
    For HF fast tokenizers $\FE$ = added-token extraction, then
    per-segment normalization, then pre-tokenization.
  \item $\ME$ (\emph{model stage}) maps one unit to a
    \emph{token-ID} sequence, as a function of
    $(\mathrm{kind},\mathrm{payload})$ alone: an added literal maps to
    its single ID; a piece maps to the IDs of its WordPiece/BPE
    encoding. $\ME^\ast$ concatenates:
    \[
    \ME^\ast(u_1,\dots,u_m) = \ME(u_1)\cat\cdots\cat\ME(u_m).
    \]
\end{itemize}
The losslessness claim is on the ID sequence
\[
F_{\mathrm{id}}(S) = (\mathrm{id}_1,\dots,\mathrm{id}_n).
\]
The global
spans $a_i,b_i$ of Definition~\ref{d:records} are \emph{not} outputs of
$\ME$ (a payload repeated at two positions has one ID but two spans);
the merge reconstructs each token's span from its unit's source interval
$I$ plus the chunk base, an operation that never crosses a junction and
is therefore outside the homomorphism. Writing $F$ for the ID sequence
below is thus without loss.
\end{definition}

\begin{assumption}[Protocol]\label{p:protocol}
Chunks are encoded with
\tok{add\_\allowbreak special\_\allowbreak tokens=\allowbreak False},
no padding,
no truncation; sequence-level post-processing (special-token
wrapping) is applied once, after the merge. This matches the deployed
merge path.
\end{assumption}

\begin{assumption}[Pipeline fidelity]\label{p:fidelity}
$F$ factors as in Definition~\ref{d:frontend} with $\ME$ per-unit
deterministic and stateless across units, for
\tok{tokenizers==0.22.2} as pinned by the manifest. BPE dropout is
off. This is the load-bearing assumption: the model stage is
per-piece, with no cross-piece state. Our differential campaigns
support it for the reference implementation; the history-dependent
divergence of \S\ref{sec:eval-verify} is precisely a violation of its
analogue by a non-reference engine.
\end{assumption}

\begin{lemma}[Concatenation homomorphism]\label{l:homo}
For unit sequences $U, V$:
$\ME^\ast(U \cat V) = \ME^\ast(U) \cat \ME^\ast(V)$.
\end{lemma}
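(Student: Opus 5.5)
The statement follows directly from the way $\ME^\ast$ is defined in Definition~\ref{d:frontend}. There, $\ME^\ast$ is defined on a unit sequence $(u_1,\dots,u_m)$ as the concatenation $\ME(u_1)\cat\cdots\cat\ME(u_m)$. Each factor depends only on the $(\mathrm{kind},\mathrm{payload})$ of its own unit, and Assumption~\ref{p:fidelity} guarantees that $\ME$ is deterministic and carries no state across units. The plan is therefore to unfold the definition on both sides and appeal to the associativity of sequence concatenation.

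First I fix notation. Let $U=(u_1,\dots,u_m)$ and $V=(v_1,\dots,v_k)$. Then $U\cat V$ is the unit sequence $(w_1,\dots,w_{m+k})$ with $w_i=u_i$ for $i\le m$ and $w_{m+i}=v_i$. Unfolding the definition gives $\ME^\ast(U\cat V)=\ME(w_1)\cat\cdots\cat\ME(w_{m+k})$.

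The key step is to check that $\ME(w_i)=\ME(u_i)$ and $\ME(w_{m+i})=\ME(v_i)$. This is exactly where the load-bearing part of Assumption~\ref{p:fidelity} is used. $\ME$ is a function of the unit's kind and payload alone, so placing a unit at a different index, or after other units, cannot change its output. The source interval $I$ does shift position when sequences are concatenated, but $\ME$ ignores it, as Definition~\ref{d:frontend} states explicitly: spans are reconstructed outside the homomorphism.

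With that in hand, I regroup the $m+k$ factors as $(\ME(u_1)\cat\cdots\cat\ME(u_m))\cat(\ME(v_1)\cat\cdots\cat\ME(v_k))$, which is valid because concatenation is associative. This regrouped expression is $\ME^\ast(U)\cat\ME^\ast(V)$. For a fully formal version I would instead induct on $|V|$: the base case $V=\varnothing$ uses the empty-sequence identity $\ME^\ast(\varnothing)=\varepsilon$, and the inductive step peels off the last unit.

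There is no real obstacle here. The only point needing care is to make explicit that $\ME$ does not read the source interval or the unit's position in the sequence. I would state that in one sentence, citing Definition~\ref{d:frontend} together with Assumption~\ref{p:fidelity}, and also note that BPE dropout is off, so $\ME$ is a genuine function rather than a randomized map.
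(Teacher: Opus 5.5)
Your proof is correct and matches the paper's argument, which simply cites the definition of $\ME^\ast$ together with Assumption~\ref{p:fidelity}; you spell out the unfolding and the associativity regrouping. One small correction: forming $U\cat V$ does not alter any unit, so no source interval $I$ shifts, and the part of the assumption doing the work is that $\ME$ is deterministic and carries no state from $u_1,\dots,u_m$ into the evaluation of $\ME(v_1),\dots,\ME(v_k)$.
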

\begin{proof}
Immediate from the definition of $\ME^\ast$ and
Assumption~\ref{p:fidelity}.
\end{proof}

Table~\ref{t:frontends} records the checked front-end configuration
of the six tokenizers analyzed in this appendix. This set includes
the two WordPiece families and is distinct from the six
GPU-certified configurations of Table~\ref{tab:correctness}. Every
locality statement is made \emph{per configuration}, not for ``HF
tokenizers'' at large.

\begin{table*}[t]
\centering\footnotesize
\begin{tabular}{@{}lllll@{}}
\toprule
Tokenizer & Normalizer & Pre-tokenizer & Added flags &
$\max$ lit. \\
\midrule
Qwen3-8B & NFC & regex (\tok{\textbackslash p\{N\}} single) +
ByteLevel & none & 20 \\
Llama-3.1-8B & none & regex (\tok{\textbackslash p\{N\}\{1,3\}}) +
ByteLevel & none & 30 \\
DeepSeek-V3/V4 & identity & 3$\times$Split + ByteLevel & none & 23 \\
gpt-oss-120b & none & regex (case-aware) + ByteLevel & none & 19 \\
BERT cased & clean\_text, CJK & BertPreTokenizer & none & 6 \\
BERT uncased & + lowercase, accents & BertPreTokenizer & none & 6 \\
\bottomrule
\end{tabular}
\caption{Front ends of the six tokenizers analyzed in this
appendix. ``$\max$ lit.'' =
length in characters of the longest added-token literal, the bound
the leftmost-longest splitter needs. ``none'' = no
\tok{lstrip}/\tok{rstrip}/\tok{single\_word} added tokens. The
unbounded-absorption attack is thereby out of scope \emph{by checked
configuration}, not by argument. DeepSeek-V3 and V4 ship
byte-identical normalizers, pre-tokenizers, and merge tables,
verified by content hash on the frozen snapshots. They differ only
in their added-token lists, so this row (and every base-tokenizer
result in the paper) covers both versions.}
\label{t:frontends}
\end{table*}

\subsection{The splice certificate and the main theorem}

\begin{definition}[Splice certificate]\label{d:cert}
Let $S$ be covered by chunk windows $S_i = S[x_i : y_i)$,
$i = 1..N$, with $x_1 = 0$, $y_N = |S|$, and overlaps
$x_{i+1} < y_i$. A \emph{splice certificate} is a sequence of
junction positions $0 = b_0 < b_1 < \cdots < b_N = |S|$ with
$b_i \in [x_{i+1}, y_i)$ for $0 < i < N$, such that:
\begin{itemize}
  \item[(C1)] no unit of $\FE(S)$ has a source interval straddling
    any $b_i$; write $\FE(S) = A_1 \cat \cdots \cat A_N$ where $A_i$
    collects the units with source interval inside
    $[b_{i-1}, b_i)$;
  \item[(C2)] for each $i$, the units of $\FE(S_i)$ with source
    interval inside $[b_{i-1}, b_i)$ (in global coordinates) are
    \emph{exactly} $A_i$ (same kinds, payloads, and intervals),
    and no unit of $\FE(S_i)$ straddles $b_{i-1}$ or $b_i$.
\end{itemize}
\end{definition}

\begin{theorem}[Losslessness under a splice certificate]\label{t:main}
Under Assumptions~\ref{p:protocol}--\ref{p:fidelity}, if a splice
certificate exists, then concatenating, for $i = 1..N$, chunk $i$'s
records at the token indices produced by its middle block $A_i$
yields exactly $F(S)$.
\end{theorem}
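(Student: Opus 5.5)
The plan is to reduce everything to the concatenation homomorphism, Lemma~\ref{l:homo}. Write $F(S) = \ME^\ast(\FE(S))$, using the factorization of Definition~\ref{d:frontend} and Assumption~\ref{p:fidelity}. Condition (C1) gives $\FE(S) = A_1 \cat \cdots \cat A_N$. Applying Lemma~\ref{l:homo} inductively over $N$ then yields
\[
F(S) = \ME^\ast(A_1) \cat \cdots \cat \ME^\ast(A_N).
\]
So it suffices to show, for each $i$, that the records chunk $i$ contributes at the token indices of its middle block equal $\ME^\ast(A_i)$.

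Next I treat the chunk side. Fix $i$ and list the units of $\FE(S_i)$ in order. By (C2), no unit of $\FE(S_i)$ straddles $b_{i-1}$ or $b_i$. Since source intervals are ordered and non-overlapping, this list splits as $L_i \cat M_i \cat R_i$:
\begin{itemize}
  \item $L_i$ holds the units whose intervals lie left of $b_{i-1}$;
  \item $M_i$ holds the units inside $[b_{i-1}, b_i)$, which (C2) says equal $A_i$ exactly (same kinds, payloads and intervals, after shifting by the chunk base $x_i$);
  \item $R_i$ holds the units to the right of $b_i$.
\end{itemize}
Assumption~\ref{p:protocol} ensures that encoding $S_i$ with no special tokens, padding or truncation gives exactly $F(S_i) = \ME^\ast(\FE(S_i))$, with nothing prepended or appended. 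Lemma~\ref{l:homo} then gives
\[
F(S_i) = \ME^\ast(L_i) \cat \ME^\ast(A_i) \cat \ME^\ast(R_i).
\]
The middle block's token indices are therefore the contiguous range from $|\ME^\ast(L_i)|+1$ to $|\ME^\ast(L_i)|+|\ME^\ast(A_i)|$. Because $\ME$ depends only on $(\mathrm{kind},\mathrm{payload})$, the IDs in that range equal $\ME^\ast(A_i)$ as computed inside $S$.

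Concatenating over $i = 1..N$ reproduces the displayed expression for $F(S)$. Two edge blocks need a remark: $b_0 = 0$ and $b_N = |S|$ coincide with the chunk endpoints, so $L_1$ and $R_N$ may be empty and the argument is unchanged. Empty middle blocks simply contribute no records.

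The main obstacle is making precise what ``the token indices produced by its middle block'' means. It must be a well-defined contiguous index range that an implementation can locate from records alone, and those records carry only spans, not unit membership. I would define the range abstractly as the image of $M_i$ under the per-unit decomposition, which is legitimate because $\ME^\ast$ is a concatenation. I would remark that this is the quantity the theorem is about; locating it via spans is an implementation matter that Definition~\ref{d:frontend} already places outside the homomorphism. Spans themselves are reconstructed from unit intervals plus the chunk base, so they agree too. Since the claim is stated on IDs, however, I would make only the ID equality the formal content.
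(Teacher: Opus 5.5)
Your proposal is correct and follows the paper's proof essentially step for step. You factor $F(S)=\ME^\ast(A_1)\cat\cdots\cat\ME^\ast(A_N)$ via (C1) and Lemma~\ref{l:homo}, split each $\FE(S_i)$ as $L_i\cat A_i\cat R_i$ by (C2), read off $\ME^\ast(A_i)$ at the index range immediately after $\ME^\ast(L_i)$ because $\ME$ depends only on kind and payload, and concatenate over $i$. Your additions (the explicit appeal to Assumption~\ref{p:protocol}, the edge blocks, and defining the middle-block index range abstractly) are harmless elaborations, though $L_1$ and $R_N$ are necessarily empty rather than merely possibly so, and no genuine induction is needed since $\ME^\ast$ is defined as a concatenation over units.
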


\begin{proof}
By (C1) and Lemma~\ref{l:homo},
\begin{align*}
F(S) &= \ME^\ast(\FE(S)) = \ME^\ast(A_1 \cat \cdots \cat A_N)\\
     &= \ME^\ast(A_1)\cat\cdots\cat\ME^\ast(A_N).
\end{align*}
Fix $i$ and write $\FE(S_i) = L_i \cat A_i \cat R_i$, which is
exactly what (C2) states: $L_i$ (resp.\ $R_i$) are the units of
$\FE(S_i)$ left of $b_{i-1}$ (right of $b_i$), possibly corrupted by
the window edges; the middle block equals $A_i$ verbatim. By
Lemma~\ref{l:homo} chunk $i$ emits
$\ME^\ast(L_i)\cat\ME^\ast(A_i)\cat\ME^\ast(R_i)$, so its records at
indices
\[
[\,|\ME^\ast(L_i)|,\; |\ME^\ast(L_i)|+|\ME^\ast(A_i)|\,)
\]
are $\ME^\ast(A_i)$: $\ME$ depends only on kind and payload
(Assumption~\ref{p:fidelity}), which (C2) pins. Concatenation over
$i$ gives $F(S)$.
\end{proof}

\begin{remark}\label{r:merge-index}
The merge is defined by token \emph{index}. Nothing in
Theorem~\ref{t:main} mentions spans, so duplicate spans, gaps, and
multi-byte grouping are inert. The $N$-chunk case needs no
induction: the unit-block factorization handles all junctions at
once.
\end{remark}

\begin{corollary}[Splice shifting through an equal run]\label{c:shift}
The deployed merge splices at the \emph{end} $c_i$ of a matched run,
not at the certified boundary $b_i$ inside it. Because $[b_i, c_i)$
lies within the equal \tok{(span,id)} run that produced the
certificate, the two chunks' record sequences (write $L$ and $R$
for them) coincide there
($L[b_i{:}c_i] = R[b_i{:}c_i]$), so
\begin{align*}
  L[{:}c_i]\cat R[c_i{:}]
  &= L[{:}b_i]\cat L[b_i{:}c_i]\cat R[c_i{:}]\\
  &= L[{:}b_i]\cat R[b_i{:}c_i]\cat R[c_i{:}]\\
  &= L[{:}b_i]\cat R[b_i{:}],
\end{align*}
splicing at $c_i$ equals splicing at $b_i$. Under the standard
overlap geometry the actual cuts are monotone and non-crossing, so
Theorem~\ref{t:main} certifies at $\{b_i\}$ while the implementation
runs at $\{c_i\}$.
\end{corollary}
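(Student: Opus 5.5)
The plan is to reduce the statement to a purely combinatorial identity on sequences, and then plug it into Theorem~\ref{t:main} one junction at a time. First I will fix a common coordinate system. Inside the overlap, the left chunk's records $L$ and the right chunk's records $R$ are both re-indexed by global token index. The offsets come from $|\ME^\ast(L_i)|$ and $|\ME^\ast(L_{i+1})|$ exactly as in the proof of Theorem~\ref{t:main}. In these coordinates the certified junction $b_i$ becomes the token index $\beta_i = |\ME^\ast(A_1)|+\cdots+|\ME^\ast(A_i)|$. At that index, Theorem~\ref{t:main} asserts that $L[{:}\beta_i]\cat R[\beta_i{:}]$ is the correct local splice.

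Second, I will use the hypothesis that the matched run is an equal \tok{(span,id)} run. Agreement of spans is what justifies the common indexing: two records at the same global index carry the same source span and the same ID. The run therefore gives $L[k]=R[k]$ for every $k$ from the start of the run up to its end $c_i$. Because the run contains the certified boundary, $\beta_i$ lies in $[\text{start},c_i]$, and so $L[\beta_i{:}c_i]=R[\beta_i{:}c_i]$. The three-line chain displayed in the statement then follows from associativity of concatenation, and it shows $L[{:}c_i]\cat R[c_i{:}] = L[{:}\beta_i]\cat R[\beta_i{:}]$.

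Third, I will lift this to all $N$ junctions by replacing $\beta_i$ with $c_i$ one junction at a time. The monotone, non-crossing overlap geometry gives $c_{i} \le \beta_{i+1}$. Each replacement therefore alters only the records of chunk $i$ and chunk $i+1$ on $[\beta_i,c_i)$, and leaves every other middle block $\ME^\ast(A_j)$ untouched. After $N-1$ such steps the implementation's output equals the output certified by Theorem~\ref{t:main}, namely $F(S)$.

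I expect the main obstacle to be the index alignment in the first step. Theorem~\ref{t:main} works with character positions and unit blocks. The equal run, by contrast, is detected on records. I need the position-and-ID agreement to imply that both chunks assign the same global token index to each record in the run, and not merely the same relative order. My first attempt will anchor the alignment at $\beta_i$ itself, where (C2) pins the middle blocks exactly, and then propagate equality of indices along the run record by record, using the equal spans. The non-crossing condition $c_i\le\beta_{i+1}$ is a secondary concern. I would take it from the overlap geometry, or if needed enforce it as an explicit check in the merge.
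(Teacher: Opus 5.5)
Your proposal is correct and follows the paper's own argument. Agreement of the two chunks' records on the matched run that contains the certified boundary, together with associativity of concatenation, gives the three-line chain, and the monotone, non-crossing cuts ($\beta_i \le c_i \le \beta_{i+1}$) are what let Theorem~\ref{t:main} certify at $\{b_i\}$ while the merge runs at $\{c_i\}$; your translation of $b_i$ into the token index $\beta_i$ and the index-alignment check make explicit what the paper leaves implicit when it writes $L[b_i{:}c_i]=R[b_i{:}c_i]$ directly, and since (C2) pins the two chunks on opposite sides of $\beta_i$, that anchoring comes from its no-straddle clause plus span agreement of the run records just left and just right of $b_i$.
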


\begin{remark}[Cost of the abstraction]
All difficulty lives in discharging (C1)/(C2). Three routes:
(a)~\emph{global front-end pass}: run $\FE$ once, serially, and hand
whole units to workers (the certificate holds by construction, at an
Amdahl cost equal to the front-end share of runtime);
(b)~\emph{local certification} at synchronizing boundaries: check a
bounded window around a candidate $b_i$;
(c)~\emph{matched-run discovery} (the incremental-repair protocol): search
for equal token runs, then obtain a certificate from them, sound for
WordPiece via a continuation-witness certificate, and for byte-BPE
only together with an explicit synchronizing-boundary check
(\S\ref{app:discovery}).
\end{remark}

\subsection{Discharging the certificate per family (summary)}
\label{app:discovery}

The companion document discharges (C1)/(C2) per checked front-end
configuration. For \emph{BERT WordPiece}, boundary discovery from a
matched run is sound with a continuation witness (the deployed
certificate). A matched run containing a non-continuation boundary
pins the pre-tokenizer's state. For the \emph{byte-BPE families}, a
matched ID run alone is \emph{not} a certificate. Context-dependent
digit grouping defeats it (the counterexample of \S\ref{sec:warm}).
The deployed check therefore requires the run to contain a
character-class transition from a per-family \emph{synchronizing
set}, proven to reset the pre-tokenizer under every left context.
The sets are derived per pattern (single-digit vs.\
$\{1,3\}$-digit grouping, case-aware boundaries for o200k). After
the Unicode-version-skew episode of \S\ref{sec:eval-correctness},
every character-class table they consult is probed directly out of
the reference engine rather than any standard library. DeepSeek's
three-splitter sequential composition reduces to the same
per-character predicate. Acceptance on natural text is preserved.
Re-replaying both trace corpora under the certificate configuration
accepts every splice the length-only check accepts
(\S\ref{sec:eval-correctness}).

\section{Workload Characterization: Details}
\label{app:workload}

This appendix carries the full workload characterization
(Fig.~\ref{fig:workload}) and the collection and parsing detail
behind \S\ref{sec:workload}. Nothing here is needed to follow the
paper's argument. All of it is needed to reproduce or audit the
workload numbers.

\begin{figure*}[t]
  \centering
  \includegraphics[width=0.76\textwidth]{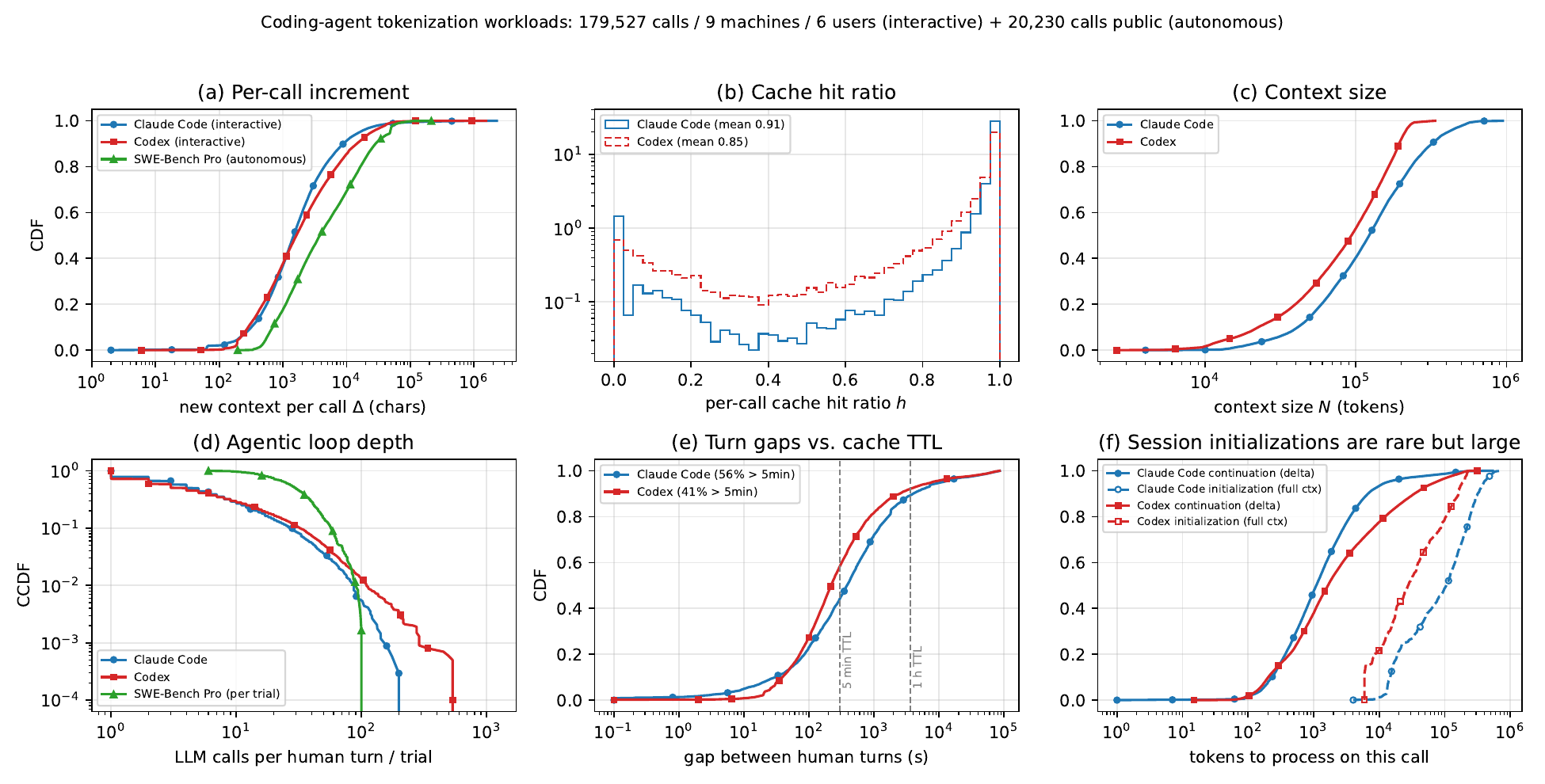}
  \caption{Anatomy of coding-agent tokenization workloads across nine
  machines, two ecosystems, and a public autonomous-agent trace.
  (a)~Per-call increment $\Delta$. (b)~Per-call cache hit ratio $h$.
  (c)~Context size $N$. (d)~LLM calls per
  human turn. (e)~Turn-gap distribution against the 5-minute and
  1-hour cache TTLs. (f)~Tokens-to-process per call, session continuations versus
  session initializations. Continuations process their small delta
  while initializations carry the full context, two orders of magnitude apart. The joint view of
  the (c) and (f) token accounting is
  Fig.~\ref{fig:workload-main} in the main text.}
  \label{fig:workload}
\end{figure*}

\subsection{Collection methodology and privacy discipline (L1)}
\label{app:workload:collection}

The L1 collector is a single-file, zero-dependency parser of the
agents' local session logs. It exports \emph{counts only}. The
fields are token usage as reported by the serving API (input,
cache-read, cache-write where available, output), character counts
of new context (text is length-counted in memory and discarded),
timestamps, model names, and client versions. Session and project
identifiers are HMAC-hashed with a key that never leaves the
contributor's machine. A mechanical self-audit (field whitelist
plus string-shape validation) refuses to package anything else. Contributors authorize collection and inspect
the human-readable output before it leaves their machine. One
contributor's Codex traffic is agent-initiated (their Claude~Code
sessions drive Codex), so its pacing inherits the human loop while
its per-call anatomy sits between our interactive and autonomous
sources.

\subsection{Parsing hazards}
\label{app:workload:hazards}

Three parsing hazards are worth recording for reproducers. First,
both ecosystems' logs report one API call as multiple streamed
records, which must be deduplicated (22{,}274$\to$9{,}987 and
62{,}090$\to$40{,}741 records-to-calls on one machine's data,
respectively). Second, one ecosystem's system-prompt text never
appears in its logs, so character-based increment estimates for first
calls are lower bounds. We therefore report token-based figures
where provenance matters. The third is subtler. One ecosystem's
fork/resume machinery replays the parent session's history
(usage-bookkeeping events included) into the new session's log file,
so a naive cross-file parse double-counts replayed calls. We detect
these replayed prefixes (${\geq}6$ consecutive calls with identical
usage four-tuples, corroborated by degenerate timestamps or a
fork-point context of ${\geq}40$\,K tokens) and drop them. The drop
removes 26{,}578 bookkeeping copies, 14.7\% of parsed calls. A
${\approx}0.8$\% residue of single-record cross-file duplicates in
the other ecosystem's logs survives this rule and is disclosed
rather than removed.

\subsection{The public autonomous-agent trace (L3)}
\label{app:workload:l3}

In \texttt{codex\_swebenchpro}, the release preserves conversation
structure and message lengths, and substitutes a repeating
length-preserving filler passage for the spans it redacts. Every
assistant message is filler. Human and tool messages keep their
prompts, commands, file paths, and captured program output verbatim,
and carry filler in place of the redacted part of their bodies. Most
corpus characters are verbatim text. We disclose this property
wherever it matters to a measurement. Its aggregate cache hit rate (94.2\%) is the figure its
publishers report for this corpus, collected on a workload disjoint
from ours, making it an independent anchor rather than a
re-measurement of our own traces. Its heavier per-call
increments (median 3.8\,K characters) reflect that tool output is
chunkier when no human is pruning it. The SWE-smith
trajectory corpus contributed by the replay campaigns of
\S\ref{sec:eval-correctness} comprises 761\,K requests across
26.1\,K trajectories, of which the certificate-replay campaigns
exercise a 200-stream sample.

\subsection{TraceLab aggregates (L4)}
\label{app:workload:l4}

Our parse of the released TraceLab~\cite{tracelab2026} v0.0.1 trace
reproduces its full
aggregates. They are 4{,}265 sessions from 43 developers,
357{,}161 LLM steps, 432{,}510 tool calls, 54.90\,B input tokens
(52.56\,B cached prefix $+$ 2.34\,B append), and 186.9\,M output
tokens. Its median step carries 126{,}180 cached-prefix tokens
against 857 appended for Claude~Code (115{,}584/886 for Codex). The
token-weighted cache hit rate is 95.7\% overall and 84.4\% on
user-triggered steps. Prefix-cache reads account for 59.5\% of
dollar spend. The per-step retokenization amplification $(P{+}A)/A$
quoted in \S\ref{sec:workload:anatomy}, computed by our script over
that release, splits as median
155$\times$ for Claude~Code and
119$\times$ for Codex, with a pooled P90 of 768$\times$. Because TraceLab
discards text, we use it two ways only. It serves as an external
check on L1--L3, and as a calibration profile for the load
generator driving the closed-loop experiments of
\S\ref{sec:eval-engine}.

\subsection{Temporal detail}
\label{app:workload:temporal}

TraceLab reports provider-side cache behavior consistent with our
client-side turn gaps. Eviction begins at pauses above 5 minutes, and
pauses above an hour almost always miss~\cite{tracelab2026}. Computed
from its released trace, the mean cached share of a step falls from
0.96 (preceding gap under 1 minute) and 0.93 (1--5 minutes) to 0.70
(5--15 minutes), 0.58 (15--60 minutes), and 0.17 beyond an hour, the
per-bucket decay behind the summary in
\S\ref{sec:workload:temporal}.

\vspace{-2.5pt}
\subsection{Note on the GPU efficiency trend}
\label{app:workload:gpucurve}

The 30$\times$ headline for generational inference throughput
(Hopper to Blackwell NVL72) is a vendor-reported comparison against
the same number of H100 GPUs~\cite{nvidia_blackwell}. The mechanisms
that source names are FP4 in a second-generation Transformer Engine
and fifth-generation NVLink across the NVL72 domain. The scissor argument of
\S\ref{sec:workload:scissor} discounts the headline to 5--15$\times$
and needs only its sign.

\section{Sweeping the Append Size}
\label{app:delta}

Table~\ref{tab:warmhead} varies the context while drawing append
sizes from the measured workload (median 1.5--1.6\,K characters per
call), so it shows how each method scales with $N$ but not with
$\Delta$. This appendix sweeps the second axis. It uses fixed
append sizes of 1\,K, 5\,K, 10\,K, 50\,K, and 100\,K characters at
every context shape of Table~\ref{tab:warmhead}, all four methods
on identical texts (Table~\ref{tab:deltasweep}). For calibration,
on the trace pool behind these experiments the per-call increment
has median 1.4\,K characters, P90 6.9\,K, and P99 38\,K. The two
largest settings are past the 99th percentile (0.8\% and 0.5\% of
measured calls reach 50\,K and 100\,K), so they price a stress
regime, not the typical loop.

\begin{table}[t]
  \centering\footnotesize
  \caption{Append-size sweep. P50 latency in ms per append
  ($n{=}24$ per cell), at every context shape of
  Table~\ref{tab:warmhead}. Context sizes (ctx) and append sizes
  ($\Delta$) are in characters. Same host, tokenizer family (Qwen3),
  and timing discipline as Table~\ref{tab:warmhead}. The four
  methods time byte-identical texts. Deltas are fixed-length
  contiguous slices of the same real transcript stream, cut in order
  from each session's append point (controlled length, real text),
  unlike the main table's natural per-call sizes. Gigatoken runs in
  its most favorable mode, cache prewarmed on the session text.
  Every repair cell took the standard repair path, with zero
  retries, no fallback, and 140/140 output spot-checks identical to
  the reference tokenizer.}
  \label{tab:deltasweep}
  \begin{tabular}{@{}lrrrrr@{}}
    \toprule
    ctx $\backslash$ $\Delta$ & 1\,K & 5\,K & 10\,K & 50\,K & 100\,K \\
    \midrule
    \multicolumn{6}{@{}l}{\emph{repair (ours)}} \\
    100\,K & 0.43 & 1.50 & 2.66 & 12.2 & 25.8 \\
    500\,K & 0.76 & 2.28 & 4.01 & 17.2 & 30.2 \\
    1\,M   & 1.29 & 2.73 & 4.41 & 17.0 & 32.2 \\
    2\,M   & 1.67 & 3.18 & 4.71 & 18.2 & 32.7 \\
    3\,M   & 2.14 & 3.71 & 5.80 & 18.6 & 33.6 \\
    4.4\,M & 3.52 & 4.89 & 6.67 & 19.7 & 34.7 \\
    8\,M   & 6.17 & 8.22 & 10.2 & 24.3 & 40.1 \\
    \midrule
    \multicolumn{6}{@{}l}{\emph{Gigatoken, prewarmed cache}} \\
    100\,K & 0.13 & 0.15 & 0.18 & 0.32 & 0.61 \\
    500\,K & 1.18 & 1.19 & 1.21 & 1.38 & 1.54 \\
    1\,M   & 2.52 & 2.54 & 2.46 & 2.75 & 2.86 \\
    2\,M   & 4.78 & 4.79 & 4.84 & 5.05 & 5.25 \\
    3\,M   & 7.34 & 7.37 & 7.42 & 7.51 & 7.64 \\
    4.4\,M & 11.7 & 11.6 & 11.6 & 11.8 & 12.1 \\
    8\,M   & 23.2 & 23.2 & 22.8 & 23.3 & 23.6 \\
    \midrule
    \multicolumn{6}{@{}l}{\emph{fastokens, full retokenization}} \\
    100\,K & 0.73 & 0.81 & 0.90 & 1.68 & 2.73 \\
    500\,K & 4.28 & 4.32 & 4.50 & 5.22 & 6.13 \\
    1\,M   & 9.58 & 8.98 & 9.19 & 10.2 & 10.9 \\
    2\,M   & 21.2 & 17.8 & 17.8 & 18.9 & 20.2 \\
    3\,M   & 31.9 & 28.0 & 26.6 & 27.9 & 28.9 \\
    4.4\,M & 55.4 & 48.9 & 45.1 & 45.8 & 47.9 \\
    8\,M   & 104.7 & 100.2 & 93.5 & 87.5 & 86.6 \\
    \midrule
    \multicolumn{6}{@{}l}{\emph{HF serial, full retokenization}} \\
    100\,K & 22.2 & 23.0 & 24.2 & 36.4 & 52.8 \\
    500\,K & 164.5 & 163.6 & 172.2 & 178.1 & 195.1 \\
    1\,M   & 322.1 & 321.6 & 323.1 & 337.5 & 352.5 \\
    2\,M   & 665.7 & 659.3 & 662.4 & 673.2 & 694.4 \\
    3\,M   & 986.0 & 983.6 & 982.2 & 996.5 & 1008.4 \\
    4.4\,M & 1545.4 & 1537.0 & 1537.0 & 1543.0 & 1561.0 \\
    8\,M   & 3156.5 & 3140.1 & 3150.4 & 3170.9 & 3162.8 \\
    \bottomrule
  \end{tabular}
\end{table}

\paragraph{Protocol.} The context texts are the frozen session
prefixes of Table~\ref{tab:warmhead}. The deltas are fresh
fixed-length slices of the same transcript stream, taken in order
from each session's append point, with the slice positions recorded
in the archive. Each cell has $n{=}24$ samples (4 sessions
$\times$ 6 slices), one timed call per sample per process, on one
pinned core. Each repair sample uses a fresh session. We bootstrap
on the prefix, run one untimed append of the session's first
natural delta (so the timed call measures the steady-state repair
path rather than the one-time first-append bookkeeping visible in
the 8\,M P90 of \S\ref{sec:eval-warm}), and then time the append of
one fixed-size slice. Gigatoken gets a fresh tokenizer object per
cell, prewarmed on the same session text outside the timing region.
The serial engines retokenize the same full text. Under this
protocol the 1\,K column reproduces Table~\ref{tab:warmhead} on its
nearby shapes. The three baselines land within $\pm$7\% of the
archived medians, and repair lands between $-$16\% and $+$7\% (the
controlled 1.0\,K append is smaller than the natural 1.5\,K median,
and the incremental-step protocol excludes the first-append cost that the
main table's mix includes).

\paragraph{Reading.} Repair's cost separates into the two terms its
bound predicts. Down a column sits the $O(N)$ term. At
$\Delta{=}1$\,K the P50 grows from 0.43 to 6.17\,ms as the context
grows 80$\times$, the prefix-verification scan. Across a row sits
the $O(\Delta)$ term. Moving from 1\,K to 100\,K adds 25.4 to
33.9\,ms depending on shape. That is the appended bytes tokenized
at 2.9--3.9\,MB/s, the same regime as the scanned account of
Fig.~\ref{fig:equiv}. The two terms compose near-additively.
Interior cells sit within 17\% of the line through their row's
endpoints (median deviation 7\%). The full retokenizers move only
through the total length $N{+}\Delta$. At the 100\,K shape, a
100\,K append doubles the text and HF's time rises 2.4$\times$. At
1\,M and above, the same append changes the total by at most 10\%,
HF moves by at most 9\%, and fastokens' medians wobble by up to
14\% in both directions, within its usual run-to-run spread. Even
at the degenerate corner where the append equals the context,
repair stays 2.0$\times$ faster than retokenizing the whole text on
the same engine.

The prewarmed Gigatoken column measures a different quantity than
length. Its per-object cache is keyed on content, so an append costs
one rescan plus the handful of pretokens it has not seen before. On
this replay corpus even 100\,K of new transcript is mostly repeated
pretokens, so its P50 at 8\,M moves by under 2\% across the
100$\times$ sweep. The price is content-dependent rather than
length-dependent (novel text pays the full merge path). It comes on
top of the structural properties discussed in \S\ref{sec:eval-warm},
namely process-lifetime keyed memory and the full-context rescan
that makes its floor grow with $N$. The sweep therefore moves the
repair-versus-cache crossover rather than erasing it. At
$\Delta{=}1$\,K (the workload's regime), repair leads from 500\,K
up, as in Table~\ref{tab:warmhead}. At 5--10\,K the crossover moves
to about 2\,M. At 50\,K the prewarmed cache ties repair at 8\,M
(23.3 vs.\ 24.3\,ms) and leads below it. At 100\,K it leads at every
shape, and fastokens too overtakes repair through the 3\,M shape
(10.9\,ms vs.\ 32.2\,ms at 1\,M), with repair regaining the lead at
4.4\,M. The mechanism is plain. Repair wins by scanning less, not
by scanning faster. Once a single append is tens of times the size
the workload actually produces, engines that scan faster per byte,
or charge only for novel content, catch up. Within the measured
append distribution, whose 99th percentile is 38\,K characters, the
ordering of Table~\ref{tab:warmhead} stands.

\section{Displaced Tables and the Served-Equivalent Account}
\label{app:tables}

\label{app:equiv}%
Figure~\ref{fig:equiv} plots the two accounts of
\S\ref{sec:eval-warm} over the public-trace replay. One repair core
climbs from 65\,MB/s at 58\,KB contexts to 1.35\,GB/s at 1.4\,MB
contexts under the served account, while its scanned account holds
at 1.8--3.6\,MB/s.

\begin{figure}[t]
  \centering
  \includegraphics[width=.72\columnwidth]{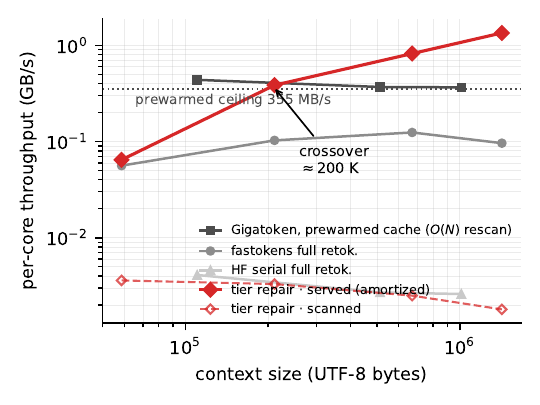}
  \caption{Per-core context throughput on the replay, two accounts
  kept separate (log--log). Under the served account, repair passes
  every $O(N)$ engine as context grows, crossing Gigatoken's
  prewarmed ceiling (355\,MB/s) at ${\sim}200$\,K bytes, while its
  scanned account stays at 1.8--3.6\,MB/s.}
  \label{fig:equiv}
\end{figure}

\label{app:guarantee}%
Table~\ref{tab:guarantee} places the two nearest neighbors on the
guarantee ladder of \S\ref{sec:splicesound}, and
Table~\ref{tab:related} condenses \S\ref{sec:related} onto the two
design axes the system combines.
\label{app:designspace}

\begin{table}[t]
  \centering\footnotesize
  \caption{Guarantee ladder for lossless tokenization claims.}
  \label{tab:guarantee}
  \begin{tabular}{@{}llll@{}}
    \toprule
    & Proof & Per-request check & Runtime guard \\
    \midrule
    Gigatoken~\cite{gigatoken2026} & none & none & none \\
    LoPT~\cite{lopt2026} & in-model$^{\dagger}$ & length threshold & none \\
    \sys & in-model & splice certificate & shadow verifier \\
    \bottomrule
  \end{tabular}
  \\[2pt]{\footnotesize $^{\dagger}$our reproduction found inputs on
  which its published configuration escapes the theorem's premise
  (\S\ref{sec:related}). Its retry/fallback bounds the damage on
  natural text.}
\end{table}

\begin{table}[t]
  \centering\footnotesize
  \caption{Design-space position. ``Exact'' = token-ID-identical to
  the reference implementation, as established by each line of work.}
  \label{tab:related}
  \begin{tabular}{@{}llll@{}}
    \toprule
    & State reuse & Hardware & Exact \\
    \midrule
    Fast CPU tokenizers & none & CPU & is ref.$^{\dagger}$ \\
    GPU tokenizers & none & GPU & no \\
    Incremental BPE & append-only & CPU & under premise \\
    Streaming BPE & bounded delay & CPU & merge stage \\
    LoPT & within-request & CPU & yes \\
    \sys & cross-time, edits & CPU+GPU & certified \\
    \bottomrule
  \end{tabular}
  \\[2pt]{\footnotesize $^{\dagger}$or claims parity with it.
  \S\ref{sec:eval-verify} shows one widely deployed member diverging
  under encode history.}
\end{table}

\begin{figure}[t]
  \centering
  \includegraphics[width=.62\columnwidth]{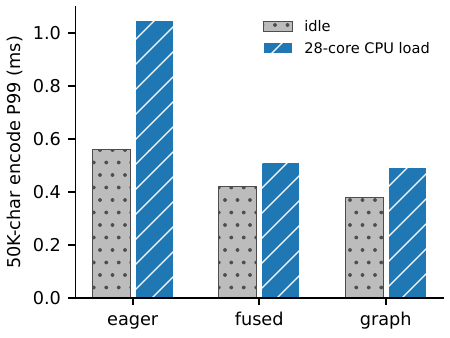}
  \caption{Host-contention companion panel to
  Figure~\ref{fig:tail}. With 28 competing CPU processes on the
  host, CUDA-graph dispatch limits the P99 of a single 50\,K-char
  full tokenization to $+$29\% while the eager pipeline doubles
  (\S\ref{sec:eval-cold}).}
  \label{fig:contention}
\end{figure}

\label{app:econ}%
Table~\ref{tab:econ} carries the cost model behind the rate
normalization of \S\ref{sec:eval-econ}, where, for the CPU front
ends, $\mathbb{E}[t_{tok}]$ in milliseconds reads directly as CPU cores
per 1{,}000 aggregate requests/s and the HuggingFace figure crosses
1{,}100 cores at $r{=}5$ while the tier stays in tens.
Table~\ref{tab:crossgen} carries the five-GPU sweep behind the
clock summary of \S\ref{sec:eval-limits}.
\label{app:crossgen}

\medskip
\noindent\begin{minipage}{\columnwidth}
  \centering\small
  \captionof{table}{Front-end cost under the measured workload mixture, the
  153{,}529 of 153{,}951 collected calls with usable token
  accounting ($\bar N{=}117$K tokens, $\bar h{=}0.88$). ``Per
  1{,}000 GPUs'' is a rate normalization, not a cluster design, and
  topology, model scale, and batching enter only through $r$, the
  request completion rate per inference GPU in requests/s. The first
  four rows are CPU front ends, so their normalized columns are CPU
  cores. $^{\dagger}$The GPU row's 0.15\,ms is GPU service time rather
  than CPU time, so its normalized entries are router-GPU equivalents
  per 1{,}000 inference GPUs, not CPU cores.}
  \label{tab:econ}
  \begin{tabular}{lrrr}
    \toprule
    Front end & $\mathbb{E}[t_{tok}]$ (ms) & cores@$r{=}0.5$ & @$r{=}5$ \\
    \midrule
    HuggingFace fast        & 228.1 & 114.0 & 1{,}140 \\
    Rust (fastokens)        & 13.4  & 6.7   & 67 \\
    Repair over HF engine   & 23.6  & 11.8  & 118 \\
    Repair over Rust engine & 2.3   & 1.1   & 11 \\
    GPU full tokenization$^{\dagger}$ & 0.15  & 0.1   & 1 \\
    \bottomrule
  \end{tabular}
\end{minipage}
\smallskip

\noindent\begin{minipage}{\columnwidth}
  \centering\small
  \setlength{\tabcolsep}{3.5pt}%
  \begin{tabular}{@{}lrrrr@{}}
    \toprule
    & \multicolumn{3}{c}{batch e2e (GB/s)} & 1\,M-char P50 \\
    GPU & eng & cjk & tmpl & (ms, array) \\
    \midrule
    RTX 5090 (Blackwell)      & 4.87 & 4.24 & 4.75 & \textbf{0.383} \\
    RTX PRO 6000 (Blackwell)  & 4.22 & 3.58 & 4.09 & \textbf{0.381} \\
    GH200 (Hopper, Grace host) & 3.91 & 3.11 & 3.79 & 0.464 \\
    H100 NVL (Hopper, x86 host) & 3.96 & 3.11 & 3.83 & 3.90$^{\dagger}$ \\
    A100 (Ampere)             & 2.45 & 1.69 & 2.56 & 5.64$^{\dagger}$ \\
    \bottomrule
  \end{tabular}
  \captionof{table}{Cross-generation portability (Qwen3, every cell backed
  by a byte-identical differential check against the reference
  tokenizer). Throughput follows sustained clock, not SM count or
  bandwidth. The sweep predates the final kernel revisions behind
  the \S\ref{sec:eval-cold} headline numbers, so cross-card
  ordering is the comparison. $^{\dagger}$Python-list delivery,
  which includes the host-interpreter tax of
  \S\ref{sec:eval-cold}.}
  \label{tab:crossgen}
\end{minipage}

\section{Serving Large Appends: Three Explored Routes}
\label{app:routes}

Appendix~\ref{app:delta} ends with a lost regime. At
$\Delta{=}100$\,K characters the fully prewarmed cache leads repair
at every context shape. Appends this large are rare, since the
measured P99 is 38\,K characters and 0.8\% of calls reach 50\,K,
but a serving tier should still have an answer for them. This
appendix details three routes we designed and measured to reclaim
that regime. All three are exploration prototypes rather than
shipped defaults, and none has passed the certification battery
that gates the shipped repair path
(\S\ref{sec:eval-correctness}). Each prototype does keep the
exactness discipline. Every timed sample asserts its output IDs
against full reference tokenization, and any divergence aborts the
run. Unless stated otherwise, numbers are P50 milliseconds under
the host, protocol, texts, and Qwen3 family of
Appendix~\ref{app:delta} ($n{=}24$ per cell), and window-max is
the 4.4\,M-character shape. Figure~\ref{fig:routing-phase} in this
appendix plots the resulting decision surface, referenced from the
routing discussion of \S\ref{sec:conclusion}. This
appendix carries the designs, measurements, and verdicts behind
it.

\subsection{Route A: parallel repair over the append (retired)}
\label{app:routes:a}

At $\Delta{=}100$\,K the shipped repair path takes 32--40\,ms
(Table~\ref{tab:routes}), almost entirely one serial encode of the
appended text. LoPT~\cite{lopt2026} splits one request into chunks
for CPU parallelism, and our splice certificate is already proved
for $N$ chunk windows (Definition~\ref{d:cert},
Theorem~\ref{t:main}). Route A combines the two. It splits the
append into $k$ chunks, tokenizes them concurrently, and admits
every seam with the same defensive match and certified-boundary
check as deployed incremental repair. A seam that fails the check
widens its overlap, and repeated failure collapses the whole
append back to serial tokenization.

Two implementation facts bound the speedup. First, the reference
engine holds the interpreter lock during encode. Sixteen threads
tokenize eight 100\,K chunks no faster than one thread (25.7
vs.\ 25.6\,ms), so workers must be processes, and fork-preloaded
workers with array transfer cost 0.3\,ms of dispatch at $k{=}2$
and 2.0\,ms at $k{=}16$. Second, seam matching stays serial Python
at about 0.3\,ms per seam. At the 1\,M shape with
$\Delta{=}100$\,K, $k{=}8$ overlaps encode down to 7.0\,ms and
pays 2.1\,ms of match. Moving to $k{=}16$ buys back 1.0\,ms of
encode while match doubles to 4.4\,ms, and wall time regresses
beyond $k{=}8$ at every shape. The certificate checks themselves
are nearly free at 0.14--0.24\,ms per repair.

The prototype works and is exact. All 1{,}104 timed samples match
the reference IDs and byte spans. Across the full sweep, all
6{,}384 seams accept on the first attempt, with zero retries and
zero collapses to serial. Real transcripts therefore carry
certified boundaries densely enough for chunk-parallel repair.
Eight cores cut $\Delta{=}100$\,K repair to 9.8, 11.3, and
13.1\,ms at 1\,M, window-max, and 8\,M, a 3.1--3.3$\times$ gain
over the shipped path, and simple variants (a 256-character
initial overlap, deferred text materialization) reach 11.4\,ms at
8\,M. Of the six cells measured here ($\Delta$ of 50\,K and
100\,K at the three shapes), that wins back all four window-max
and 8\,M cells, and both 1\,M cells stay lost (9.8 vs.\
2.9\,ms). We
retired the route anyway. Route B reaches 2.4--3.7$\times$ lower
latency on the Table~\ref{tab:routes} cells using one core instead
of eight. What survives is evidence about the certificate itself.
The deployed system applies one certificate across time. Route A
applies many across space within a single request, at a 100\%
first-try acceptance rate on real text.

\subsection{Route B: a cache-based tokenizer as window engine}
\label{app:routes:b}

Route A parallelizes the shipped window encoder. Route B replaces
it. Gigatoken's cache-based engine tokenizes repeated text far
faster than the reference engine, but it returns token IDs only,
and repair state needs byte spans (\S\ref{sec:state}). For
byte-level BPE families the byte length of every token is fixed by
the vocabulary, so spans are reconstructable from IDs alone. A
prefix sum over per-token byte lengths gives byte offsets, and a
byte-to-character map lifts them to character spans. The
reconstruction runs behind four guards. Family admission accepts
byte-level BPE only. The byte-length table is derived from two
independent sources and compared entry by entry. Every call checks
that the reconstructed offsets exactly cover the window. For the
NFC-normalized family (Qwen3), a segmented invariance guard
verifies that normalization does not rewrite the window, at
0.43\,ms on a 100\,K window, and refuses windows it would rewrite.
A refusal falls back to the reference engine. Exactness therefore
comes from the tier's own checks, not from the engine.

Reconstructed spans match the reference tokenizer's offsets token
by token on four byte-BPE families, 1.19--1.30\,M tokens per
family, over multilingual, code, adversarial, and real-transcript
suites. We then replayed 17{,}066 certified splices and 1{,}800
adversarial edits through the full incremental repair path with the
engine
swapped in, in lock step with the reference engine, and observed
zero divergence. The only nonzero counter is 44 guard refusals,
all from one stream about Unicode transliteration whose decomposed
sequences NFC rewrites, and each refusal fell back with IDs still
equal. We word this as a pilot, not an admission. Admission would
require the full battery of \S\ref{sec:eval-correctness} (92{,}484
splices and 15{,}000 edits) plus engine-level integration of the
fallback semantics.

The swap removes the $O(\Delta)$ bottleneck. Appended bytes
tokenize at 107--128\,MB/s at the 1\,M and window-max shapes and
94\,MB/s at 8\,M, against 2.9--3.9\,MB/s on the shipped path,
roughly a 30--40$\times$ higher delta term. At $\Delta{=}100$\,K
Route B takes 2.66, 3.79, and 5.34\,ms at 1\,M, window-max, and
8\,M on one core. It matches the fully prewarmed cache at 1\,M
(2.86\,ms) and beats it by 3.2$\times$ at window-max and
4.4$\times$ at 8\,M, because it inherits repair's structure and
rescans nothing outside the window. Small appends do not regress,
and the 1\,K rows (1.5--2.2\,ms) sit in the range of the shipped
store's archived 1\,K rows (1.3--6.2\,ms). Two costs frame
deployment. A fresh engine object takes about 180\,ms to
construct, outside the timing region, and a long-lived object
reintroduces the content-keyed memory growth discussed in
\S\ref{sec:eval-warm}. Combining Routes A and B is mechanically
sound, with every combined-run assertion passing, but pointless at
these sizes. Window encodes are already sub-millisecond, so pool
dispatch dominates and the $k{=}8$ combination stays flat at
3.1--3.2\,ms (1\,M) and 5.4--5.8\,ms (8\,M) across all append
sizes.

\subsection{Route C: GPU full tokenization with span export}
\label{app:routes:c}

A large append can also be served by full tokenization. The GPU path
re-encodes base plus append in 1.08--1.13\,ms at 1\,M and
2.82--3.18\,ms at window-max over both measured appends (50\,K and
100\,K), the cells behind the 6--28$\times$ rerouting estimate of
\S\ref{sec:conclusion}, and 1.1--2.4$\times$ faster than the best
CPU route in the same cell. At 8\,M the picture splits by
normalization. Llama~3.1 stays flat at 6.1--6.2\,ms. Qwen3
requests fork on the NFC quick check (\S\ref{sec:eval-cold}) into
three service classes of about 6, 36, and 260\,ms, for a pass, a
CPU adjudication that confirms the text unchanged, and an actual
rewrite. Sessions that hit the slow branches belong on a CPU
route. The hybrid split is measured, not hypothetical.

Routing to full tokenization had a blocker. The kernels emit IDs without byte
spans, so seeding session state required a reference rebuild
costing 407, 2{,}067, and 4{,}112\,ms at the three shapes. The
rebuild row includes offset materialization and state
construction, so it exceeds a bare full encode. The measured
traces would even let a background rebuild hide. After a 50\,K+
append, the probability that the session's next call arrives
inside the rebuild window is 0.3--4.3\% across shapes, for an
expected extra cost below 1.6\,ms per event, at the price of one
CPU core busy for 0.4--4.1\,s. Span export removes the account
entirely. Route B's byte-length reconstruction runs on the device.
IDs gather per-token byte lengths, a cumulative sum yields byte
offsets, a second cumulative sum over non-continuation text bytes
lifts them to character spans, and one packed transfer returns IDs
and spans together. The shipped kernels are unchanged, the
prototype reuses Route B's guards, and exporting spans adds only
0.13--0.73\,ms to the IDs-only channel up to window-max. For Qwen3
the existing NFC quick check is exactly the guard the
reconstruction needs. A pass or an identity adjudication implies
the spans are original-text coordinates, and a rewrite implies
refusal and fallback. The measured refusal surface matches Route
B's CPU guard case for case.

Exported spans match the reference on 2{,}544{,}721 tokens across
the two families, with zero ID and zero offset mismatches, and on
288 full-scale route samples of 1.06--8.1\,M characters. End to
end, a GPU full tokenization with span export seeds session state that
then serves five real appends per session. All 345 replayed append
steps and all 69 end-of-stream oracle checks equal the
bootstrapped reference, and the 3 refused seedings are Qwen3
rewrite sessions at 8\,M. Seeding costs 1.8--8.3\,ms on the GPU
plus 0.011\,ms to wrap the arrays for the repair engine, about
230--500$\times$ below the reference rebuild. Building the boxed
Python session state from the same arrays instead costs a further
0.2--1.0\,s, a cost the Rust store form does not pay.
\S\ref{sec:discussion} lists GPU span export as the
highest-leverage missing piece of the shipped system. The
prototype shows the mechanism is sound and cheap. What remains is
battery-scale certification, added-token handling, and
adapter-level fallback semantics. The leverage also exceeds this
appendix's tail case, because every full tokenization that seeds
session state currently pays the reference rebuild.

\subsection{Verdict}
\label{app:routes:verdict}

\begin{figure}[t]
  \centering
  \includegraphics[width=.94\columnwidth]{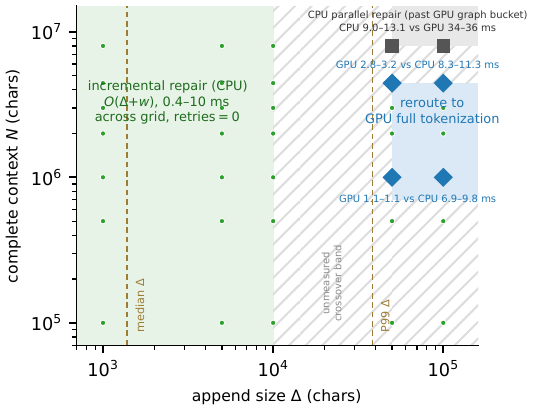}
  \caption{Measured routing phase diagram on the append-size
  $\times$ context plane, under exploration-prototype accounting
  rather than shipped defaults. Small green points are incremental repairs
  from the append-size sweep, and the six large markers are cells
  where a GPU full-tokenization rebuild and the best CPU repair option
  were
  measured head to head. Real appends concentrate far left of the
  GPU region, and past the GPU graph-capture bucket a parallel CPU
  repair prototype wins again. Hatching marks unmeasured bands.}
  \label{fig:routing-phase}
\end{figure}

\begin{table}[t]
  \centering\footnotesize
  \caption{The three routes head to head at $\Delta{=}100$\,K
  characters (P50 ms, Qwen3, $n{=}24$ per cell, same texts and
  protocol as Table~\ref{tab:deltasweep}, prototype accounting).
  ``Shipped'' is the deployed repair path and ``cache'' is
  Gigatoken fully prewarmed, both archived from the
  Appendix~\ref{app:delta} sweep. Bold marks the fastest route per
  shape. $^{\dagger}$NFC quick-check fast branch. The mixed P50
  over sessions that hit the slow branches is 36.3\,ms (see
  text).}
  \label{tab:routes}
  \setlength{\tabcolsep}{4.0pt}%
  \begin{tabular}{@{}lrrrrr@{}}
    \toprule
    ctx & shipped & A ($k{=}8$) & B (1 core) & C (GPU) & cache \\
    \midrule
    1\,M    & 32.2 & 9.80 & 2.66 & \textbf{1.13} & 2.86 \\
    4.4\,M  & 34.7 & 11.3 & 3.79 & \textbf{2.82} & 12.1 \\
    8\,M    & 40.1 & 13.1 & \textbf{5.34} & 6.05$^{\dagger}$ & 23.6 \\
    \bottomrule
  \end{tabular}
\end{table}

Table~\ref{tab:routes} settles the regime that
Appendix~\ref{app:delta} lost. Up to window-max the GPU route is
fastest. At 8\,M a CPU repair prototype wins, the parallel route
at $\Delta{=}50$\,K (9.0\,ms) and the window engine at
$\Delta{=}100$\,K, with the GPU fast branch within 1.2$\times$.
Figure~\ref{fig:routing-phase} draws the same surface with the
parallel prototype as the CPU option, and this table adds Route B,
which improves the 8\,M cell further. Every cell the prewarmed
cache won in Table~\ref{tab:deltasweep} is reclaimed by at least
one route, and the routes compose into a natural hybrid. Large
appends go to the GPU, NFC slow sessions and the largest shapes go
to a CPU repair route, and span export seeds session state either
way.

Perspective still matters. Appends of 50\,K characters and above
are 0.8\% of measured calls, so these routes price a tail, not the
loop that \S\ref{sec:eval} measures, and the shipped default
remains repair for every session continuation until a rerouted path passes
the same certification battery. What this appendix establishes is
that the crossover of Appendix~\ref{app:delta} is not structural.
It reflects the shipped window encoder, and three independent,
measured mechanisms move it.

\end{document}